\documentclass[%
reprint,
nofootinbib,
 amsmath,amssymb,
 aps,
 prl,
]{revtex4-2}

\usepackage{graphicx}
\usepackage{dcolumn}
\usepackage{bm}
\usepackage{float}
\usepackage{subcaption}
\usepackage{multirow}
\usepackage{comment}
\usepackage{dsfont}
\usepackage{amsthm}
\usepackage{xcolor}
\usepackage{array}
\usepackage{eucal}
\usepackage{makecell}
\usepackage{mathtools}
\usepackage{makecell}

\def\ie{{\frenchspacing\it i.e.}}
\def\eg{{\frenchspacing\it e.g.}}

\newcommand{\mat}[1]{\mathbf{#1}}

\def\diff{{\rm diff}}

\def\f{\textbf{f}}
\def\g{{\bf g}}
\def\h{{\bf h}}

\def\I{{\bf I}}
\def\A{{\bf A}}
\def\G{{\bf G}}
\def\J{{\bf J}}
\def\M{{\bf M}}
\def\SS{{\bf S}}
\def\T{{\bf T}}
\def\U{{\bf U}}

\def\r{\textbf{r}}

\def\x{\textbf{x}}

\def\z{{\textbf{z}}}
\def\zhat{\hat\z}
\def\zdot{\dot{\z}}
\def\fNN{\f_{\rm NN}}
\def\m{\textbf{m}}
\def\p{\textbf{p}}

\def\W{\mat{W}}
\def\T{\mat{T}}
\def\hl{\hat{L}}
\def\jth{\hat{j^{\rm th}}}

\newcommand{\zm}[1]{\textcolor{blue}{#1}}

\newtheorem{lemma}{Lemma}

\def\spose#1{\hbox to 0pt{#1\hss}}
\def\simlt{\mathrel{\spose{\lower 3pt\hbox{$\mathchar"218$}}
     \raise 2.0pt\hbox{$\mathchar"13C$}}}
\def\simgt{\mathrel{\spose{\lower 3pt\hbox{$\mathchar"218$}}
     \raise 2.0pt\hbox{$\mathchar"13E$}}}
\def\simpropto{\mathrel{\spose{\lower 3pt\hbox{$\mathchar"218$}}
     \raise 2.0pt\hbox{$\propto$}}}

\def\beq#1{\begin{equation}\label{#1}}
\def\eeq{\end{equation}}
\def\beqa#1{\begin{eqnarray}\label{#1}}
\def\eeqa{\end{eqnarray}}

\def\eqnum#1{~(\ref{#1})}

\def\fig#1{Fig.~\ref{#1}}
\def\Fig#1{Figure~\ref{#1}}

\def\tabl#1{Tab.~\ref{#1}}
\def\Tabl#1{Table~\ref{#1}}

\newtheorem{theorem}{Theorem}

\begin{document}

\title{Machine-learning hidden symmetries}

\author{Ziming Liu and Max Tegmark}
\affiliation{%
Department of Physics, Massachusetts Institute of Technology, Cambridge, USA
}%

\date{\today}

\begin{abstract}
We present an automated method for finding hidden symmetries, defined as symmetries that become manifest only in a new coordinate
system that must be discovered. Its core idea is to  quantify asymmetry as violation of certain partial differential equations, and to numerically minimize such violation over the space of all invertible transformations, parametrized as invertible neural networks. 
For example, our method rediscovers the famous Gullstrand-Painlev\'e metric that
manifests hidden translational symmetry in the Schwarzschild metric of non-rotating black holes, as well as  
Hamiltonicity, modularity and other simplifying traits not traditionally viewed as symmetries.
\end{abstract}

\maketitle

\section{Introduction}

Philip Anderson famously argued that ``It is only slightly overstating the case to say that physics is the study of symmetry" \cite{PhilipAnderson1972}, and discovering symmetries has proven enormously useful both for deepening understanding and for solving problems more efficiently, in 
physics \cite{Wigner,PhilipAnderson1972,Wilczek} as well as machine learning \cite{cohen2016group, thomas2018tensor,fuchs2020se,kondor2018clebsch,satorras2021n,phialaprl,phialasun}. 

Discovering symmetries is useful but hard, because they are often not {\it manifest} but  {\it hidden}, becoming manifest only after an appropriate coordinate transformation. 
For example, after Schwarzschild discovered his eponymous black hole metric, 
it took 17 years until 
Painlev\'e, 
Gullstrand 
and Lema\^itre 
showed that it had hidden translational symmetry: 
they found that the spatial sections could be made translationally invariant with a clever coordinate transformation, thereby deepening our understanding of black holes \cite{MisnerThorneWheelerBook}. As a simpler example, \fig{fig:1d_ho_plot} shows the same vector field in two coordinates systems where rotational symmetry is manifest and hidden, respectively. 

Our results below are broadly applicable because they apply to a very broad definition of symmetry, including not only {\it invariance} and {\it equivariance} with respect to arbitrary Lie groups, but also {\it modularity} and {\it Hamiltonicity}. If a coordinate transformation is discovered that makes such simplifying properties manifest, this can not only deepen our understanding of the system in question, but also enable an arsenal of more efficient numerical methods for studying it.

Discovering hidden symmetries is unfortunately highly non-trivial, because it involves a search over  
all smooth invertible coordinate transformations, and has traditionally been accomplished by scientists making inspired guesses. The goal of this {\it Letter} is to present a machine learning method for automating hidden symmetry discovery. Its core idea is to  quantify asymmetry as violation of certain partial differential equations, and to numerically minimize such violation over the space of all invertible transformations, parametrized as invertible neural networks. For example, the neural network automatically learns to transform \fig{fig:1d_ho_plot}(b) into \fig{fig:1d_ho_plot}(c), thereby making the hidden rotational symmetry manifest. Our method differs from previous work~\cite{PINN,prl_enforcing,LNN,NNPhD,phialaprl,phialasun} that exploits manifest symmetries, partial differential equations or other physical properties to facilitate machine learning, but not the other way around to discover hidden symmetries with machine learning as a tool.

In the Method section, we introduce our notation and symmetry definition and present our method for hidden symmetry discovery. In the Results section, we apply our method  to classical mechanics and general relativity examples to test its ability to auto-discover hidden symmetries.

\begin{figure}
    \centering
    \includegraphics[width=1\linewidth]{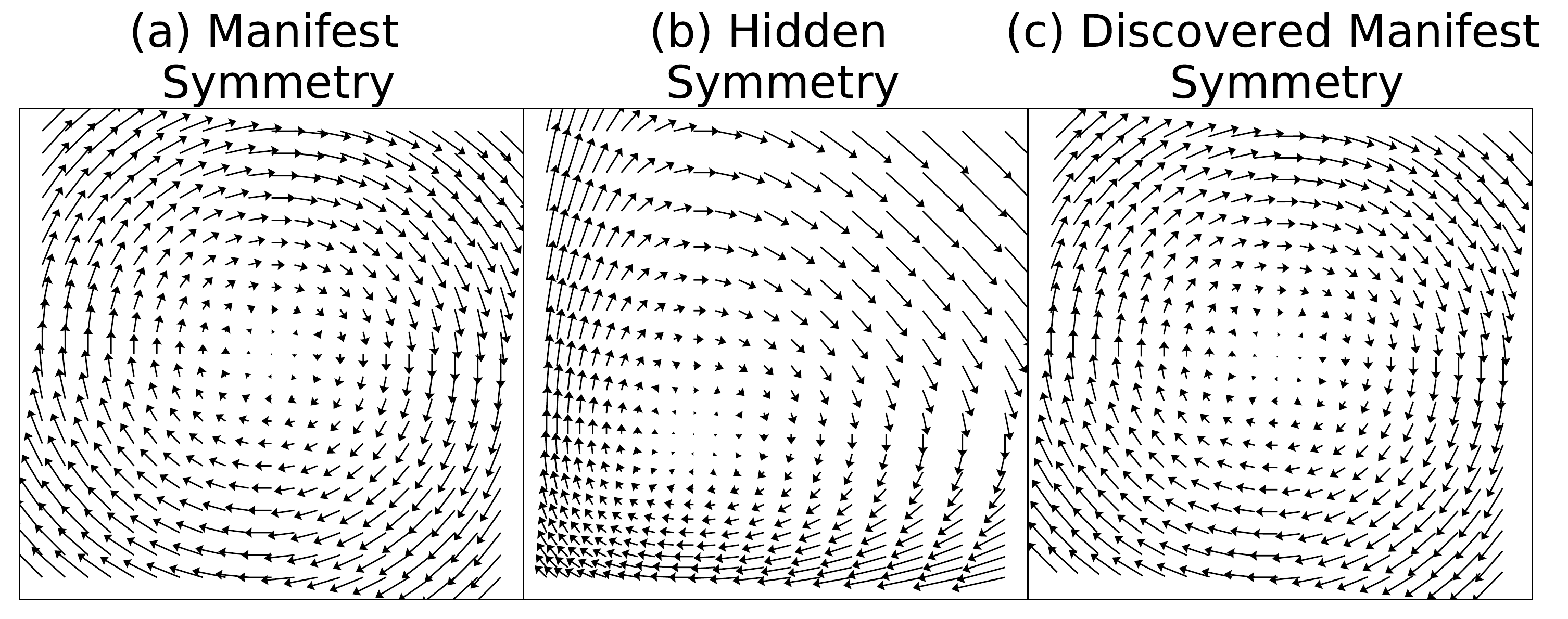}
    \caption{1D harmonic oscillator phase space flow vector field  $\f(x,p)=(p,-x)$. The rotational symmetry of $\f$ is manifest in (a) and hidden in (b). Our algorithm can reveal the hidden symmetry by auto-discovering the transformation from (b) to (c).}
    \label{fig:1d_ho_plot}
\end{figure}

\begin{figure}
    \centering
    \includegraphics[width=0.75\linewidth]{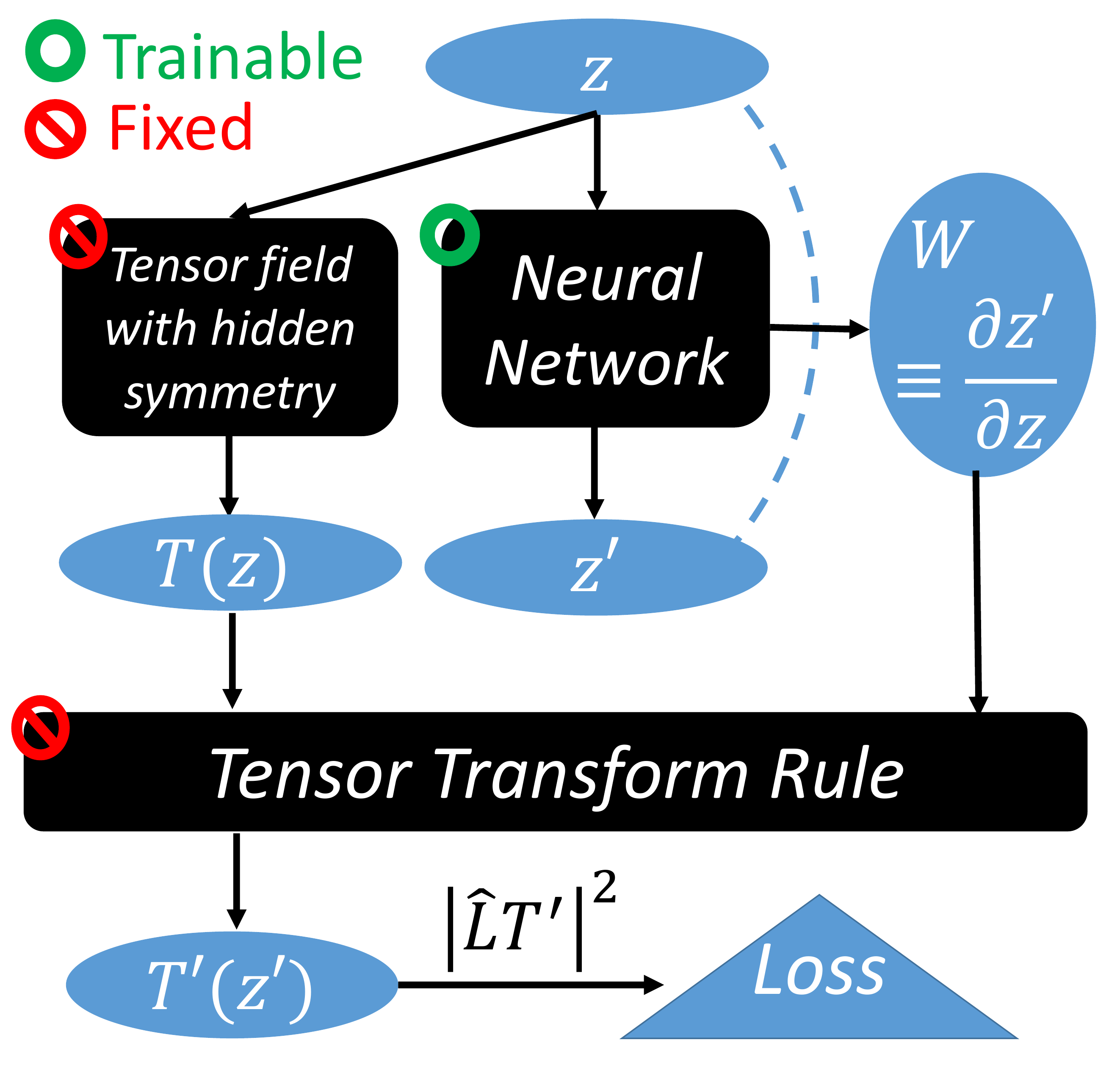}
    \caption{Schematic workflow of our algorithm for discovering hidden symmetry}
    \label{fig:workflow}
\end{figure}

\section{Method}

\subsection{PDEs encoding generalized symmetries}

\begin{table}[]
    \centering
    \caption{PDE and Losses for Generalized Symmetries}
    \resizebox{0.5\textwidth}{!}{%
    \begin{tabular}{|c|c|c|c|}\hline
    Generalized symmetry & Linear operator $\hl$ & Loss $\ell$ & Examples \\\hline
    Translation invariance& $\hl_j=\partial_j$ & $\ell_{\rm TI}$ & A,E,F \\\hline
    Lie invariance & $\hl_j=K_j\z\cdot\nabla$ & $\ell_{\rm INV}$ & E,F \\\hline
    Lie equivariance & $\hl_j=K_j\z\cdot\nabla\pm K_j$ & $\ell_{\rm EQV}$ & B \\\hline 
    Canonical eqvariance & \makecell{$\hl_{j}^{\x}=K_j\x\cdot\nabla_{\x}-K_j^t\p\cdot\nabla_{\p}+K_j^t$ \\ $\hl_{j}^{\p}=K_j\x\cdot\nabla_{\x}-K_j^t\p\cdot\nabla_{\p}-K_j$} & $\ell_{\rm CAN}$ & C \\\hline 
    Hamiltonicity  & $\hl_{ij}=-\mat{m}_i^t\partial_j+\mat{m}_j^t\partial_i$ & $\ell_{\rm H}$ & A,B,C,D\\\hline
    Modularity & $\hl_{ij}=\mat{A}_{ij}\hat{\z}_i^t\partial_j$ & $\ell_{\rm M}$ & D \\\hline
    \end{tabular}}
    \label{tab:pde}
\end{table}

We seek to discover symmetries in various tensor fields $T(\z)$ for $\z\in\mathbb{R}^n$, for example the vector 
field $\f(\z)$ (a rank-1 tensor) defining a dynamical system $\z(t)$ through a vector differential equation $\dot\z=\f(\z)$, or the metric $g(\z)$ (a rank-2 tensor) quantifying spacetime geometry in general relativity.
We say that a tensor field $T$ has
a {\it generalized symmetry} if it obeys a linear partial differential equation (PDE) $\hl\,T=0$,
where $\hl$ is a linear operator that encodes the symmetry generators.
This definition covers a broad range of interesting situations, as illustrated by the examples below (see Table \ref{tab:pde} for a summary).

{\bf Translational Invariance}: 
A tensor field $T$ is invariant under translation in the $\jth$ coordinate direction $\zhat_j$ if 
$T(\z+a\zhat_i) = T(\z)$ for all $\z$ and $a$, which is equivalent to satisfying the PDE 
$\partial T/\partial z_j=0$, corresponding to the linear operator $\hl=\partial_j$.

{\bf Lie invariance \& equivariance}: 
If $T(\z)$ satisfies $T(g\z)=g^nT(\z)$ for all elements 
$g$ of some Lie group $\mathcal{G}$ and an integer $n=-1$, 0 or 1, 
then we say that $T$ is {\it invariant} if $n=0$, and {\it equivariant} otherwise ($n=1$ corresponds to a {\it covariant} (1,0) vector field and $n=-1$ corresponds to a {\it contravariant} (0,1) vector field)~\footnote{An $(m,n)$-tensor means the tensor has $m$ covariant and $n$ contravariant indices, which is a convenient notations especially in general relativity when dealing with metrics. We call a (1,0)-tensor a covariant vector, and a (0,1)-tensor a contravariant vector.}.
Taking the derivative on the both sides of the identity $T(e^{K_j a}\z)=e^{n K_j a} T(\z)$ with respect to $a$ at $a=0$ gives the PDEs $\hl_j\,\f=0$ with 
$\hl_j\equiv K_j\z\cdot \nabla - n K_j$. \Fig{fig:1d_ho_plot} (a) and (c) show examples of rotational equivariance.

{\bf Hamiltonicity} (a.k.a. {\it symplecticity}): A dynamical system $\z(t)\in\mathbb{R}^{2d}$ obeying a vector differential equation 
$\zdot=\f(\z)$ is called {\it Hamiltonian} or {\it symplectic} if $\f=\M\nabla H$ for a scalar function $H$, where
\beq{MdefEq}
\M\equiv\left(
\begin{tabular}{cc}
0&\I\\
-\I&0
\end{tabular}\right),
\eeq
and $\I$ is the $d\times d$ identity matrix. Such systems are of great importance in physics, where it is customary to write $\z=(\x,\p)$, because
the Hamiltonian function $H(\z)$ (interpreted as energy) is a conserved quantity under the system evolution 
$\zdot=\f(\z)=(\dot{\x}, \dot{\p})=\M\nabla H=(\partial_\p H, -\partial_\x H)$.
Hamiltonicity thus corresponds to $\M^{-1}\f$ being a gradient, \ie, to its generalized curl (the antisymmetric parts of its Jacobian matrix) vanishing.
Letting $\J\equiv\nabla\f$ denote the Jacobian of $\f$  ($J_{ij}\equiv f_{i,j}$) and using the fact that $\M^{-1}=\M^t=-\M$ (superscript $t$ denotes transpose), Hamiltonicity is thus equivalent to satisfying the PDEs
$\hl_{ij} \f=0$ where $\hl_{ij} \f=(\M\J+\J^t\M)_{ij}$ for all $i$ and $j$ ($n(n-1)\over 2$ independent  ODE's in all),
corresponding to 
$\hl_{ij}=-\m_i^t\partial_j+\m_j^t\partial_i$, 
where $\m_i$ are the column vectors of $\M$.
In other words, although Hamiltonicity is not traditionally thought of as a symmetry, it conveniently meets our generalized symmetry definition and can thus be auto-discovered with our method.

{\bf Canonical equivariance}: We define a Hamiltonian system as canonically equivariant if $\z=(\x,\p)$ and the vector field $\f\equiv(\f_{\x},\f_{\p})$ satisfies $\f_{\x}(g\x,g^{-1}\p)=g^{-t}\f_{\x}$ and $\f_{\p}(g\x,g^{-1}\p)=g\f_{\p}$ for all $g\in\mathcal{G}$.
These two equations are equivalent to the PDEs $\hl_j^{\x}\f_{\x}=0$ and $\hl_j^{\p}\f_{\p}=0$ with $\hl^{\x}_j=K_j\x\cdot\nabla_{\x}-K_j^t\p\cdot\nabla_{\p}+K_j^t$ and $\hl^{\p}_j=K_j\x\cdot\nabla_{\x}-K_j^t\p\cdot\nabla_{\p}-K_j$. In special cases when the generator $K_j$ is anti-symmetric (\eg, for the rotation group), $\hl_j^{\x}=\hl_j^{\p}$.

{\bf Modularity}: A dynamical system $\z(t)$ obeying $\zdot=f(\z)$ is {\it modular} if the Jacobian $\J=\nabla\f$ is block-diagonal, which implies that the components of $\z$ corresponding to different blocks evolve independently of each other.
More generally, we say that a system is $(n_1+\cdots+n_k)$-modular if $\J$ vanishes except for blocks of size $n_1,...n_k$, which we can write as 
$\A\circ\J=0$ where $\circ$ denotes element-wise multiplication ($([\A\circ \J])_{ij}=\A_{ij}\J_{ij}$) and the elements of the mask matrix $\A$ equal 1 inside the blocks, vanishing otherwise.
Although modularity is typically not viewed as a symmetry, it too thus meets our generalized symmetry definition and can be auto-discovered with our method using the matrix PDE $\A\circ\nabla\f=0$, corresponding to the linear operators
$\hl_{ij}\equiv \A_{ij}\zhat_i^t\partial_j$.

\subsection{Our algorithm for discovering hidden symmetries}

We now describe our algorithm of discovering hidden symmetries. Since $\hl T=0$ implies manifest symmetry, $|\hl T|^2$ is a natural measure of manifest symmetry violation. We therefore define the {\it symmetry loss} as 
\beq{lDefEq}
\ell\equiv {\langle |\hl T(\z)|^2\rangle\over\langle |\z|^2\rangle^\alpha},
\eeq
where angle brackets denote averaging over some set of points $\z_i$, and $\alpha$ is chosen so that $\ell$ is scale-invariant \ie, 
invariant under a scale transformation $\z\to a\z$, $T\to a^{m-n}T$, $\hat{L}\to a^s \hat{L}$, $\ell\to a^{2(m-n+s-\alpha)}$ if $T$ has $m$ contravariate indices and $n$ covariate indices. Hence we choose $\alpha=m-n+s$. 
We jointly search for multiple hidden symmetries by using the loss function $\ell=\sum_i\ell_i$ where each $i$ corresponds to one symmetry, denoted by subscripts as in \tabl{tab:pde}. 

Discovering hidden symmetry is equivalent to minimizing $\ell$ over all diffeomorphisms (everywhere differentiable and invertible coordinate transformations), which we parametrize with an invertible neural network. \Fig{fig:workflow} shows the workflow of our algorithm: (1) a neural network transforms $\z\mapsto\z'$ and obtains the transformation's Jacobian $\W\equiv d\z'/d\z$; (2) in parallel with (1), we evaluate the known tensor field $T$ at $\z$; (3) we  jointly feed $\W(\z)$ and $T(\z)$ into a module which implements the tensor transformation rule and  gives $T'(\z')$; (4) we compute the symmetry loss of $\ell(T')$. 
Note that only the neural network is trainable, while both the tensor field with hidden symmetry $T(\z)$ and tensor 
transformation rule are hard-coded in the workflow. We update the neural network with back-propagation to find the coordinate transformation 
$\z\mapsto\z'$ that minimizes $\ell$. If the resulting $\ell$ is effectively zero, a hidden symmetry has been discovered.

\subsection{Neural network training and symbolic regression} 

We parametrize the coordinate transformation $\z\mapsto\z'$ as 
$\z'=\z+\fNN(\z)$, where $\fNN$ is a fully connected neural network with two hidden layers containing 400 neurons each. We use a silu activation function~\cite{silu} rather than the popular ReLU alternative, because our method requires activation functions to be 
twice differentiable (since the loss function involves first derivatives of output with respect to input via the Jacobian $\W$). Derivatives of PDE losses and Jacobians are calculated with automatic differentiation and backpropagation. The invertibility of the mapping $\z\to\z'$ is guaranteed by the fact that $\det\W\to 0$ and the loss function $\ell\to\infty$ if $\z\to\z'$ approaches non-invertibility, as seen in equations\eqnum{TransformationEq1}-\eqnum{TransformationEq4} in the supplementary material.
The supplementary material also provides further technical details on the selection of data points $\z$, neural network initialization and training. If multiple symmetries are tested, the training process is performed in multiple stages: at each stage, we add one more symmetry to the loss function and re-train to convergence.

We then apply AI Feynman, a physics-inspired symbolic regression module, to interpret what the neural network has learned;
for details, see Appendix \ref{app:feynman} and~\cite{aifeynman,aifeynman2}.

\section{Results}

\begin{table*}[]
    \centering
    \caption{Physical Systems studied}
    \resizebox{\textwidth}{!}{%
    \begin{tabular}{|c|c|c|c|c|c|c|c|}\hline
    ID & Name  & Original dynamics $\dot\z=\f(\z)$ or metric $g(\z)$ & Transformation $\z\mapsto\z'$& Symmetric dynamics 
    $\dot\z'=\f'(\z')$ or metric $g'(\z')$ &  Manifest Symmetries \\\hline
    A & \makecell{1D Uniform \\ Motion }  & $\frac{d}{dt}\begin{pmatrix}
	a \\ b
	\end{pmatrix}=\begin{pmatrix}
	\frac{1}{2}(a+b){\rm ln}(\frac{a-b}{2})\\
	\frac{1}{2}(a+b){\rm ln}(\frac{a-b}{2})
	\end{pmatrix}$ & $\begin{pmatrix}
	a \\ b
	\end{pmatrix}=\begin{pmatrix}
	e^{\frac{1}{2}x} + e^{\frac{1}{2}p}\\ e^{\frac{1}{2}x} - e^{\frac{1}{2}p}
	\end{pmatrix}$ & $\frac{d}{dt}\begin{pmatrix}
    x\\ p
    \end{pmatrix}=\begin{pmatrix}
    p\\0
    \end{pmatrix}$ & \makecell{Hamiltonicity \\ 1D Translational invariance} \\\hline
    B & \makecell{1D Harmonic \\Oscillator} & $\frac{d}{dt}\begin{pmatrix}
    a \\ b 
    \end{pmatrix}=\begin{pmatrix}
    (1+a){\rm ln}(1+b) \\ 
    -(1+b){\rm ln}(1+a)
    \end{pmatrix}$ & $\frac{d}{dt}\begin{pmatrix}
    a \\ b
    \end{pmatrix}=\begin{pmatrix}
    e^{\frac{1}{2}x}-1 \\ e^{\frac{1}{2}p}-1
    \end{pmatrix}$ &$\frac{d}{dt}\begin{pmatrix}
    x \\ p
    \end{pmatrix}=\begin{pmatrix}
    p \\ -x
    \end{pmatrix}$ & \makecell{Hamiltonicity \\ SO(2)-equivariance} \\\hline
    C & 2D Kepler &  $\frac{d}{dt}\begin{pmatrix}
    a \\ b \\ c \\ d
    \end{pmatrix}=\begin{pmatrix}
    (1+a){\rm ln}(1+b) \\
    -\frac{(1+b){\rm ln}(1+a)}{8({\rm ln}^2(1+a)+{\rm ln}^2(1+c))^{3/2}}\\
    (1+c){\rm ln}(1+d) \\
    -\frac{(1+d){\rm ln}(1+c)}{8({\rm ln}^2(1+a)+{\rm ln}^2(1+c))^{3/2}}\\
    \end{pmatrix}$ &  $\begin{pmatrix}
    a \\ b \\ c \\ d
    \end{pmatrix}=\begin{pmatrix}
    e^{\frac{1}{2}x}-1 \\ e^{\frac{1}{2}p_x}-1 \\ e^{\frac{1}{2}y}-1 \\ e^{\frac{1}{2}p_y}-1
    \end{pmatrix}$  & $\frac{d}{dt}\begin{pmatrix}
    x \\ p_x \\ y \\ p_y
    \end{pmatrix}=\begin{pmatrix}
    p_x \\ -x/r^3 \\ p_y \\ -y/r^3
    \end{pmatrix}$ & \makecell{Hamiltonicity \\Can SO(2)-equivariance} \\\hline   
    D & \makecell{Double \\ Pendulum} & $\frac{d}{dt}\begin{pmatrix}
    \theta_1 \\ \dot{\theta}_1 \\ \theta_2 \\ \dot{\theta}_2
    \end{pmatrix}=\begin{pmatrix}
    \dot{\theta}_1 \\ -\frac{(m_1+m_2)g}{m_1l}\theta_1+\frac{m_2g}{m_1l}\theta_2 \\ \dot{\theta}_2 \\ \frac{(m_1+m_2)g}{m_1l}\theta_1-\frac{(m_1+m_2)g}{m_1l}\theta_2 
    \end{pmatrix}$ & \makecell{
    $\begin{pmatrix}
    \theta_1 \\ 
    \theta_2 \\ 
    \dot{\theta}_1 \\ \dot{\theta}_2
    \end{pmatrix}=\begin{pmatrix}
    -1 & 1 & 0 & 0 \\
     a & a & 0 & 0 \\
    0 & 0 & -1 & 1 \\
    0 & 0 & a & a
    \end{pmatrix}\begin{pmatrix}
    \theta_+ \\ \theta_- \\ \dot{\theta}_+ \\ \dot{\theta}_-
    \end{pmatrix}$ \\
    $a=\sqrt{\frac{m_1+m_2}{m_2}}$} &
    \makecell{$\frac{d}{dt}\begin{pmatrix}
    \theta_+ \\ \dot{\theta}_+ \\ \theta_- \\ \dot{\theta}_-
    \end{pmatrix}=\begin{pmatrix}
    \dot{\theta}_+ \\ -\omega_+^2\theta_+ \\ \dot{\theta}_- \\ -\omega_-^2\theta_-
    \end{pmatrix}$ \\  $\omega_{\pm}^2=\frac{(m_1+m_2)g}{m_1l}(1\pm\sqrt{\frac{m_2}{m_1+m_2}})$}  & \makecell{Hamiltonicity \\ $(2+2)$-Modularity} \\\hline
    E & \makecell{Expanding\\universe\\\& empty space} & $\g=\begin{pmatrix}
   	1 & 0 & 0 & 0 \\
   	0 & -(r^2+\frac{kx^2}{1-kr^2})\frac{t^2}{r^2} & -\frac{kxyt^2}{1-kr^2} & -\frac{kxzt^2}{1-kr^2} \\
   	0 & -\frac{kxyt^2}{1-kr^2} & -(r^2+\frac{ky^2}{1-kr^2})\frac{t^2}{r^2} & -\frac{kyzt^2}{1-kr^2} \\
   	0 & -\frac{kxzt^2}{1-kr^2} & -\frac{kyzt^2}{1-kr^2} & -(r^2+\frac{kz^2}{1-kr^2})\frac{t^2}{r^2}
   	\end{pmatrix}$ & $\begin{pmatrix} t' \\ x' \\ y' \\ z'
   	   \end{pmatrix}=\begin{pmatrix}
   	   t\sqrt{1+r^2} \\ tx \\ ty \\ tz
   	   \end{pmatrix}$ & $\g=\begin{pmatrix}
   	1 & 0 & 0 & 0 \\
   	0 & -1 & 0 & 0 \\
   	0 & 0 & -1 & 0 \\
   	0 & 0 & 0 & -1
   	\end{pmatrix}$ & \makecell{SO(3,1)-Invariance \\ 4D Translational Invariance}\\\hline
    F & \makecell{Schwarzchild\\black hole\\\& GP metric} & $\g=\begin{pmatrix}
    1-\frac{2M}{r} & 0 & 0 & 0 \\
    0 & -1-\frac{2Mx^2}{(r-2M)r^2} & -\frac{2Mxy}{(r-2M)r^2} & -\frac{2Mxz}{(r-2M)r^2} \\
    0 & -\frac{2Mxy}{(r-2M)r^2} & -1-\frac{2My^2}{(r-2M)r^2} & -\frac{2Myz}{(r-2M)r^2} \\
    0 & -\frac{2Mxz}{(r-2M)r^2} & -\frac{2Myz}{(r-2M)r^2} & -1-\frac{2Mz^2}{(r-2M)r^2}
    \end{pmatrix}$ & \makecell{
    	$\begin{pmatrix}
    		t' \\ x' \\ y' \\ z'
    	\end{pmatrix} = \begin{pmatrix}
    	t+2M\left[2u+\ln{u-1\over u+1}\right]\\
    	x\\ 
    	y\\ 
    	z
    	\end{pmatrix}$ \\ $u\equiv\sqrt{\frac{r}{2M}}$} &   $\g=\begin{pmatrix}
    	1-\frac{2M}{r} & -\sqrt{\frac{2M}{r}}\frac{x}{r} & -\sqrt{\frac{2M}{r}}\frac{y}{r} & -\sqrt{\frac{2M}{r}}\frac{z}{r} \\
    	-\sqrt{\frac{2M}{r}}\frac{x}{r} & -1 & 0 & 0 \\
    	-\sqrt{\frac{2M}{r}}\frac{y}{r} & 0 & -1 & 0 \\
    	-\sqrt{\frac{2M}{r}}\frac{z}{r} & 0 & 0 & -1
    	\end{pmatrix}$ & \makecell{SO(3)-Invariance \\ 3D Translational Invariance} \\\hline
    \end{tabular}}\\
    \label{tab:examples}
\end{table*}

\begin{figure*}[]
    \centering
    \resizebox{\textwidth}{!}{%
    \begin{tabular}{|c|c|c|}\hline
	\includegraphics[width=0.4\linewidth]{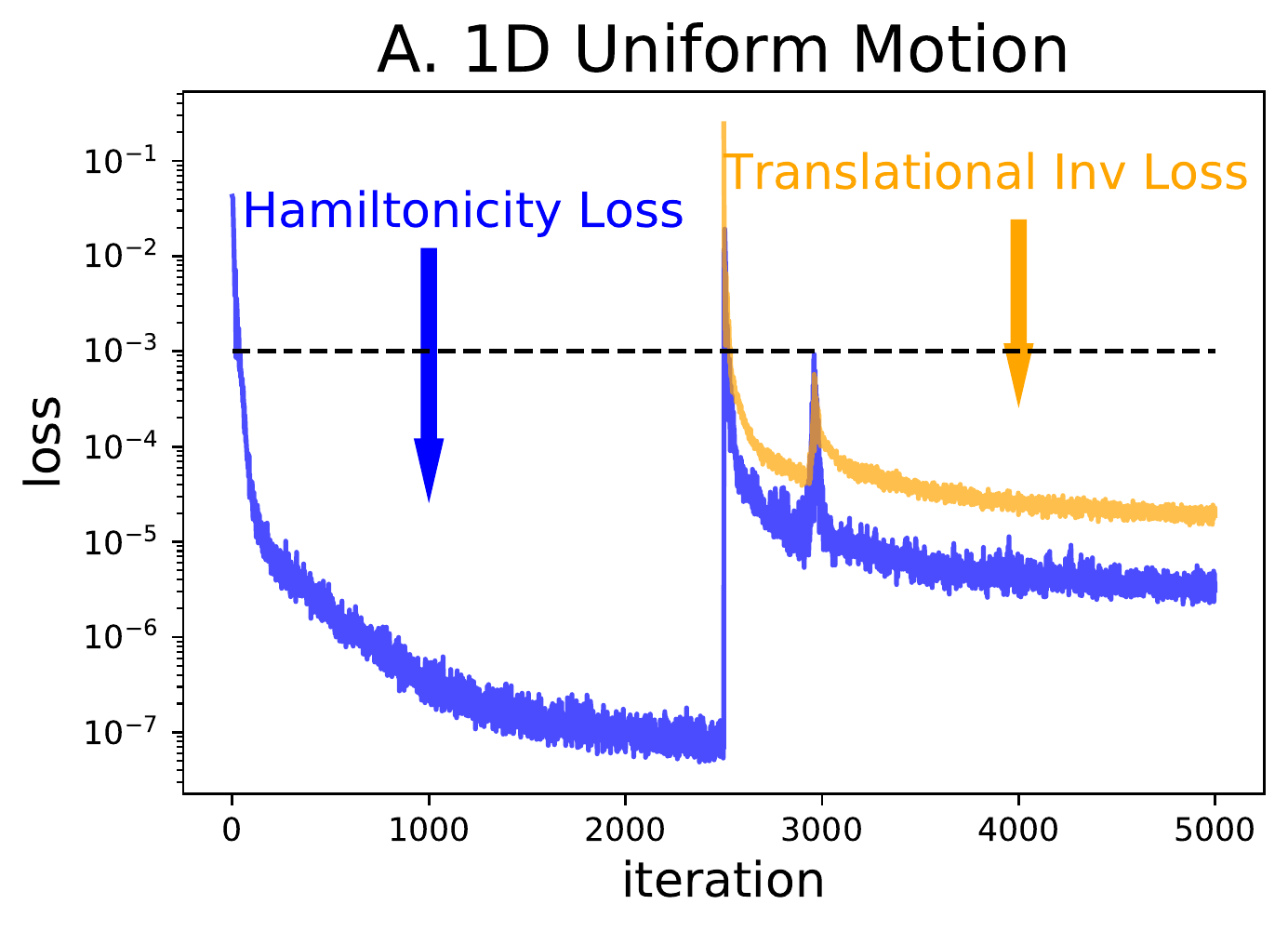}
	 & \includegraphics[width=0.4\linewidth]{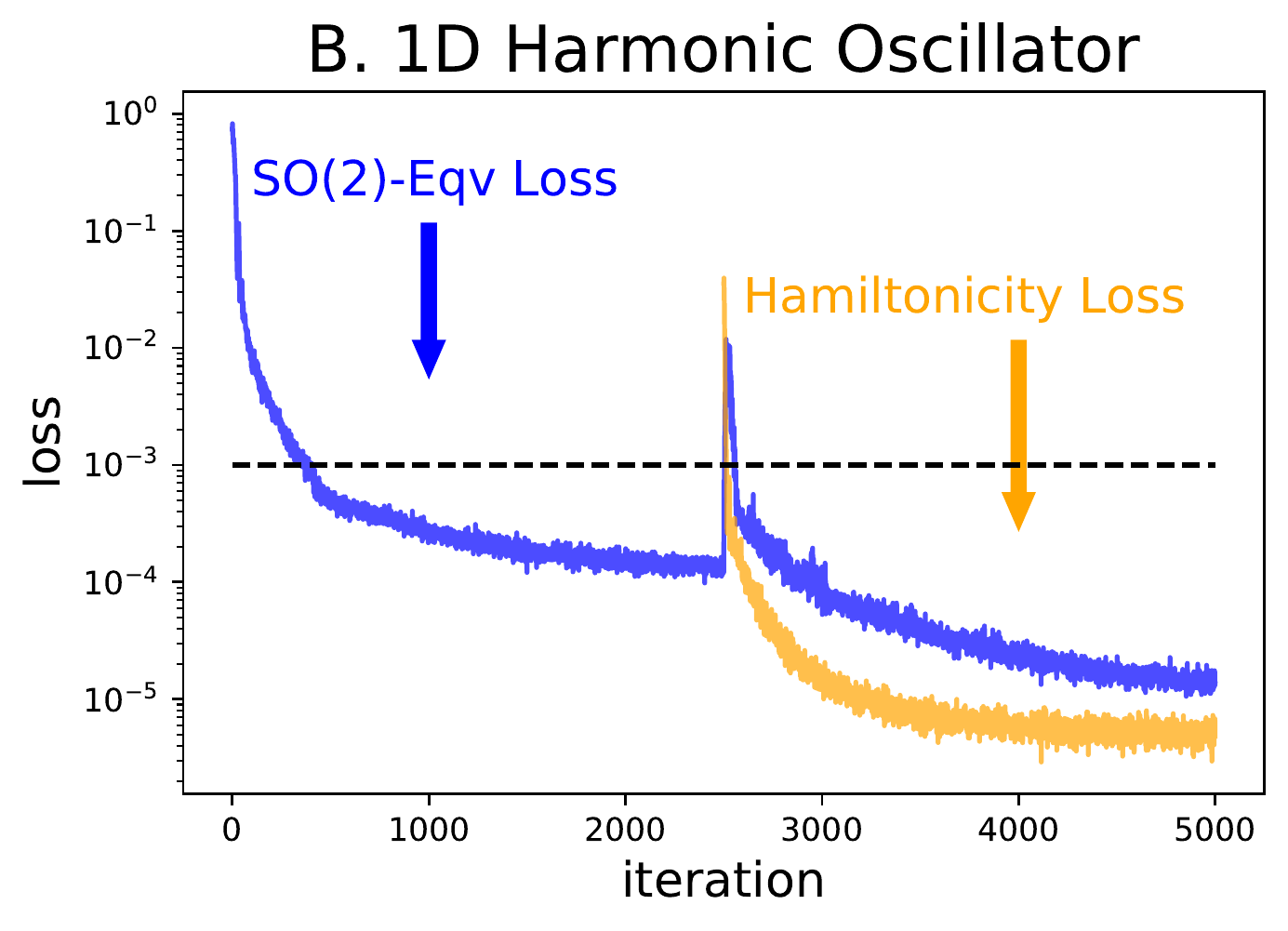} & \includegraphics[width=0.4\linewidth]{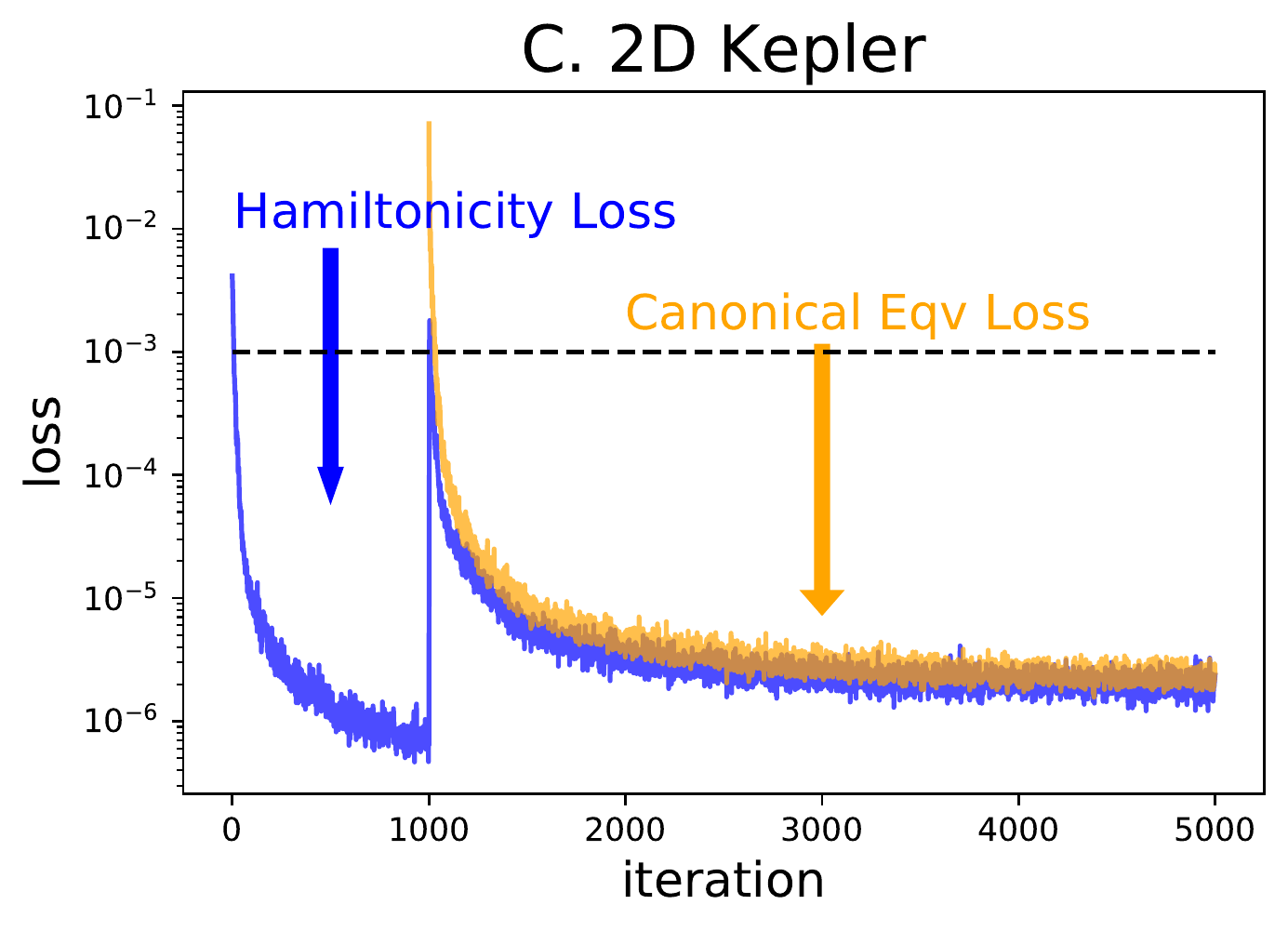} \\\hline
     \includegraphics[width=0.4\linewidth]{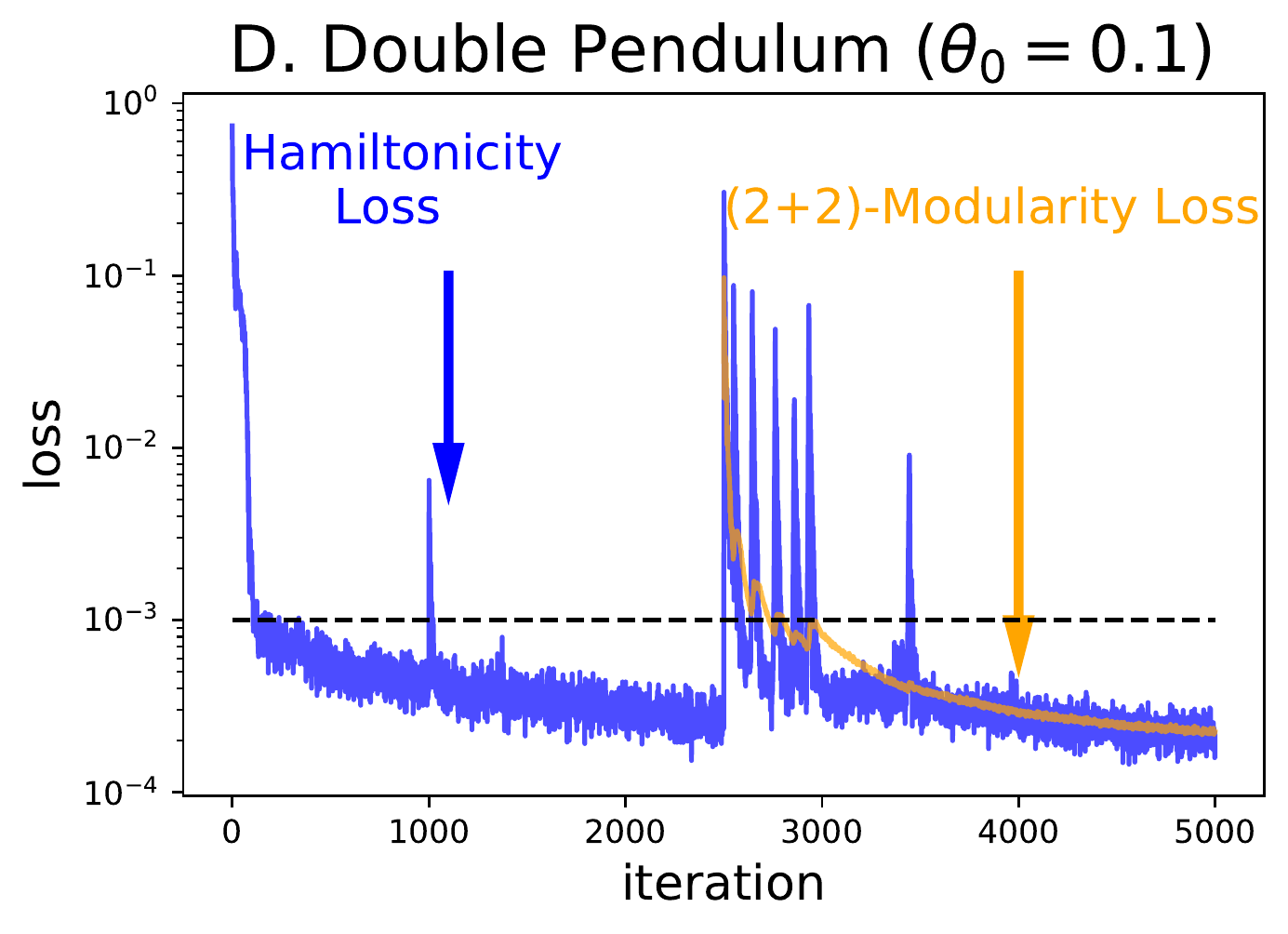} &  \includegraphics[width=0.4\linewidth]{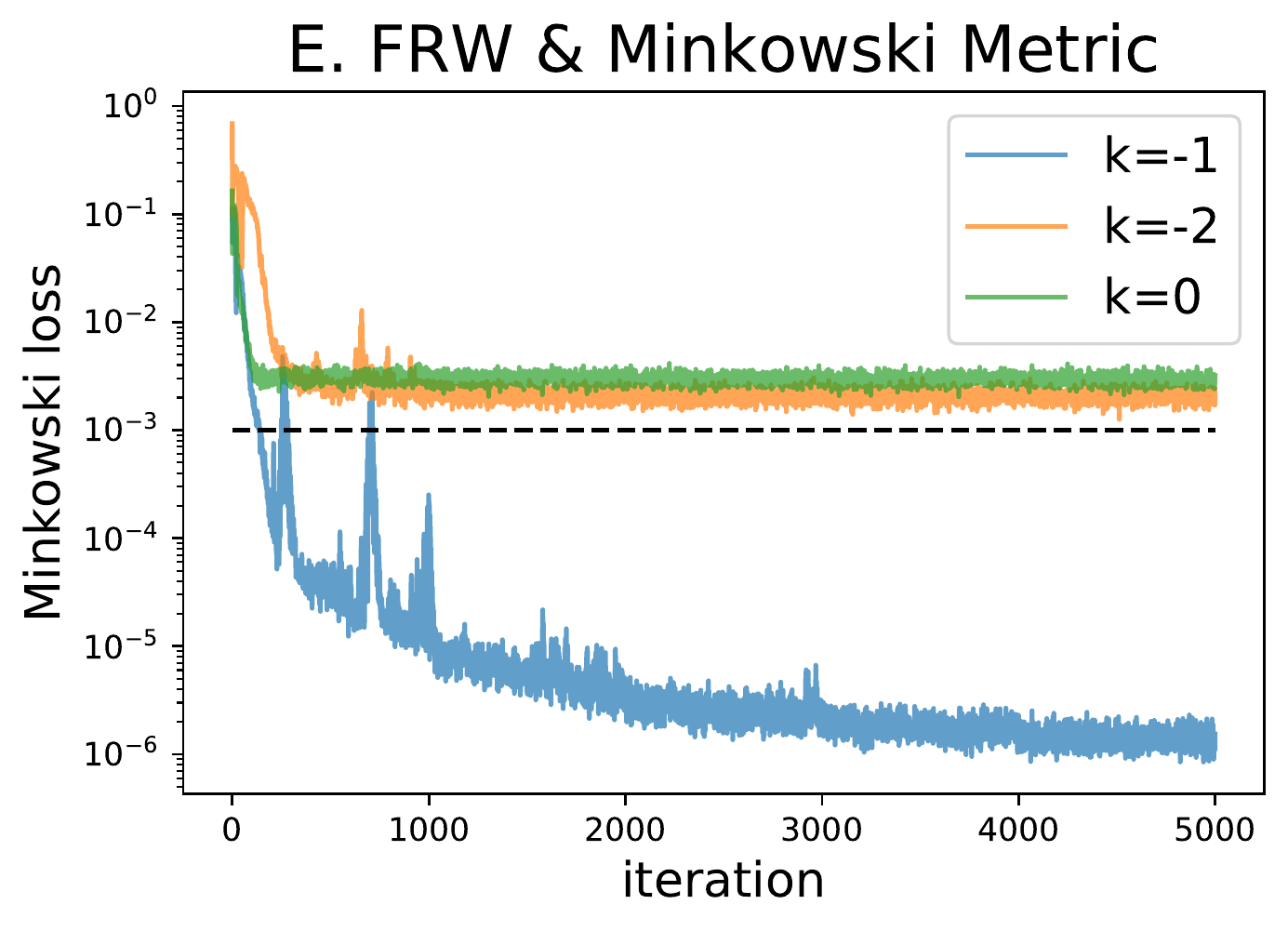} & \includegraphics[width=0.4\linewidth]{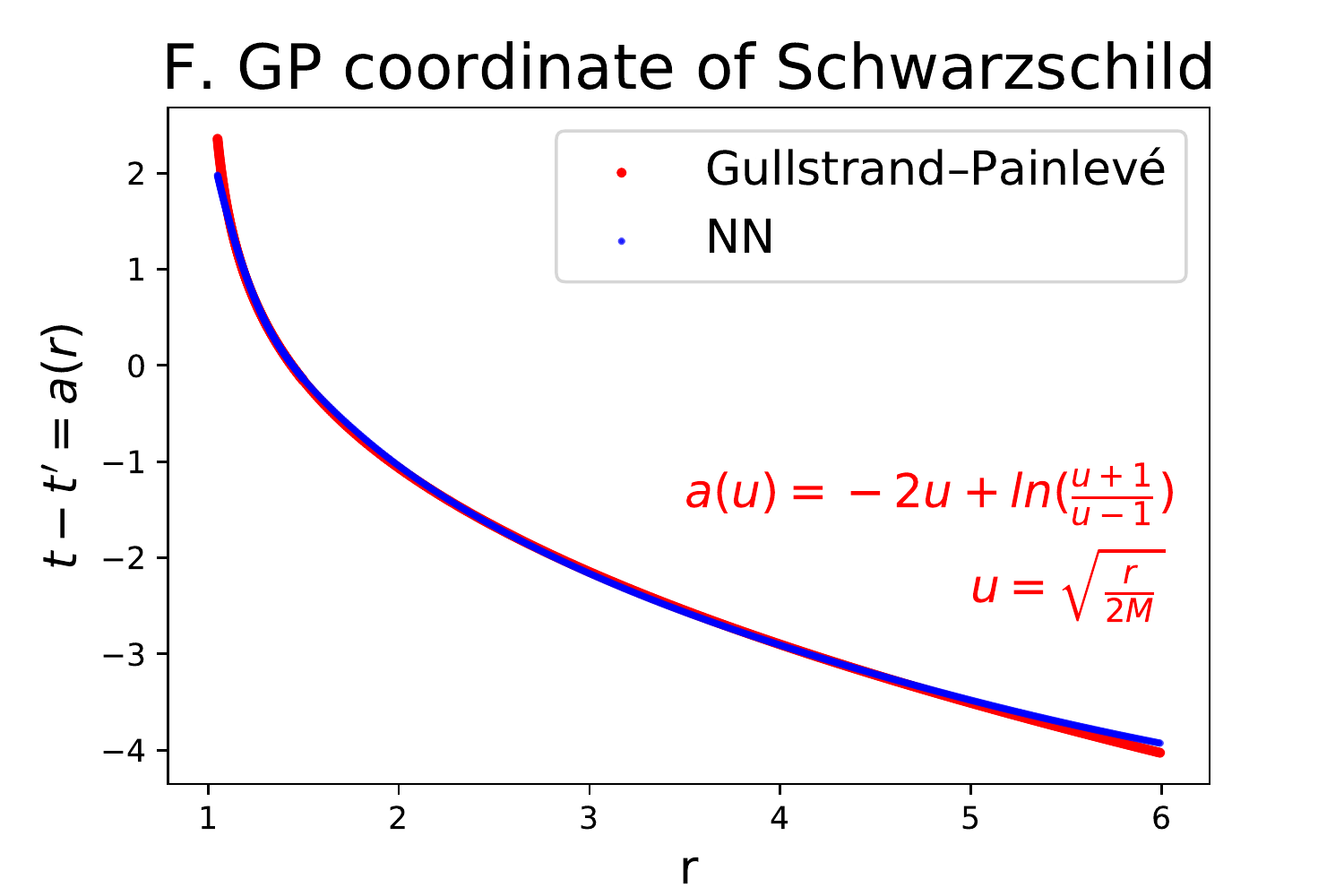} \\\hline
    \end{tabular}}
    \caption{All hidden symmetries in six tested systems are discovered by our algorithm. The last figure shows that the neural network accurately learns the Gullstrand-Painlev{\'e} transformation.}
    \label{fig:results}
\end{figure*}

We will now test our algorithm on 6 physics examples, ranging from classical mechanics to general relativity. 
\Tabl{tab:examples} summarizes these examples, labeled A, B,...,F for easy reference, listing their manifestly 
non-symmetric equations, their simplifying coordinate transformations, their transformed and manifestly symmetric equations, and their discovered hidden symmetries. As we will see, all the symmetries we had hidden in our test examples were rediscovered by our algorithm. The only example is the transformation for $A$, where the problem is so simple that an infinite family of transformations give equal symmetry.

\subsection{Warmup examples}

To build intuition for how our method works, we first apply it to the simple warmup examples A, B and C, corresponding to systems involving free motion, harmonic oscillation or Kepler problem whose simplicity has been obfuscated by various coordinate transformations.
For our examples, we consider a hidden symmetry to have been tentatively discovered if its corresponding loss drops below a threshold $\epsilon=10^{-3}$~\footnote{How low the loss should be to warrant interpretation as a symmetry discovery depends on both training accuracy and data noise; see appendix G for details.
For our examples, we found $\epsilon=10^{-3}$ to be an small enough for symbolic regression to be able to discover the exact formula, after which the symmetry loss drops to exactly zero.}. 
If that happens, we apply the AI Feynman symbolic regression package \cite{aifeynman2,aifeynman} to try to discover a symbolic approximation of the learned transformation $\fNN$ that makes the symmetry loss zero. As can be seen in \tabl{tab:examples} and \fig{fig:results}, all hidden symmetries are successfully discovered together with the coordinate transformations that reveal them. This includes not only traditional hidden symmetries such as translational invariance (example A) and rotational equivariance (example B), but also Hamiltonicity and modularity.

A-C were toy examples in the sense that we had hidden the symmetries by deliberate obfuscation.
In contrast, the value of our algorithm lies in its ability to discover symmetries hidden not by people but by nature, as in example D (the linearized double pendulum). 
We see that our method auto-discovers both hamiltonicity (by finding the correct conjugate momentum variables) and modularity (by auto-discovering the two normal modes), even though neither of these symmetries were 
manifest in the most obvious physical coordinates (the pendulum angles and angular velocities)
\cite{goldstein2002classical}.

\subsection{General relativity examples}


As a first general relativity (GR) application (example E), we consider the Friedmann-Robertson-Walker (FRW) metric\footnote{We use $r\equiv\sqrt{x^2+y^2+z^2}$ for brevity in \protect\tabl{tab:examples}, but not in the neural network, which actually takes
$(t,x,y,z)$ as inputs.} 
describing a homogeneous and isotropic expanding universe with negative spatial curvature ($k=1$) and cosmic scale factor evolution $a(t)=t$.
A GR expert will realize that its Riemann tensor vanishes everywhere, so that there must exist a coordinate transformation revealing this to be simply empty space in disguise, with Poincar\'e symmetry (Lorenz symmetry and 4D translational symmetry). Discovering this transformation is non-trivial, and is sometimes assigned as a homework problem in graduate courses. 

It is easy to show that any metric with Poincar\'e symmetry must be a multiple of the Minkowski metric $\eta$, so we define our Poincar\'e symmetry loss as $\ell\equiv {\langle ||T(\z)-\eta||^2\rangle/\langle||T(\z)||^2\rangle}$.
\Fig{fig:results} (E) shows that the Minkowski loss drops below $10^{-3}$, indicating that the $k=-1$ FRW metric is indeed homomorphic to Minkowski space, while 
the loss gets stuck above $10^{-3}$ for $k=-2$ and $k=0$.
Applying the AI Feynman symbolic regression package \cite{aifeynman2} to 
the learned transformation $\fNN=(t,x,y,z)$ reveals the exact formula
$(x',y',z',t')=(tx,ty,tz,t\sqrt{1+x^2+y^2+z^2})$, which gives vanishing loss.

We now turn to studying the spacetime of a non-rotating black hole described by the Schwarzschild metric (without loss of generality, we set $2M=1$).
This problem proved so difficult that it took physicists 17 years to clear up the misconception that something singular occurs at the event horizon, until it was finally revealed that the apparent singularity at $r=2M$ was merely caused by a poor choice of coordinates \cite{Finkelstein,Kruskal,GullstrandPainleve,GullstrandPainleve2}, just as the $z$-axis is merely a coordinate singularity in spherical coordinates.
Our method auto-discovers hidden translational symmetry in the spatial coordinates $(x,y,z)$, revealed by the coordinate transformation $t'=t+2M\left[2u+\ln{u-1\over u+1}\right]$,
where $u\equiv\sqrt{r/2M}$,
which is auto-discovered by applying AI Feynman \cite{aifeynman2} to the learned transformation $\fNN$
(see \fig{fig:results}, panel F). Since both the original and target metrics have the ${\rm SO}(3)$ (rotational) spatial symmetry, our neural network parametrizes the coordinate transformation $(x,y,z,t)\to(x',y',z',t')$ via a two-dimensional transformation $(r,t)\to (r',t')$, where $r\equiv \sqrt{x^2+y^2+z^2}$.
This transforms the Schwarzschild metric into the famous Gullstrand-Painlev\'e metric \cite{GullstrandPainleve,GullstrandPainleve2}, 
which is seen to be perfectly regular at the event horizon and can be intuitively interpreted simply as flat space flowing inward with the classical escape velocity \cite{MisnerThorneWheelerBook,Hamilton_2008}.

\section{Conclusions}          

We have presented a machine-learning algorithm for auto-discovering hidden symmetries,
and shown it to be effective for a series of examples
from classical mechanics and general relativity. 
Our symmetry definition is very broad, corresponding to the data satisfying a differential equation,
which encompasses both traditional invariance and equivariance as well as Hamiltonicity and modularity. 

Our work is linked to Noether's theorem~\cite{Noether1918}, which states that a continuous symmetry leaving the Lagrangian invariant corresponds to a conservation law. The Lagrangian is a scalar, a special case of the tensors of this paper. If we rewrite the dynamical equations in the form of  Euler-Lagrange equations, then the invariance of the Lagrangian under a symmetry group is equivalent to the equivariance of the dynamical equation under the same symmetry group, both of which imply  the same conservation laws. 

In future work, it will be interesting to seek hidden symmetries in data from both experiments and numerical simulations. Although our examples involved no more than two symmetries at once, it is 
straightforward to auto-search for a whole library of common symmetries, 
adopting the best-fitting one and recursively searching for more hidden symmetries until all are found.

Currently, our method can only search for symmetries from a list of candidates pre-specified by the user,
and cannot search for unknown symmetries. In future work, it will be interesting to enable search also for unknown symmetries, \eg, by making the Lie generators trainable. In other words, if there is {\it any} differential equation that a suitably transformed dataset satisfies, one would seek to auto-discover both the transformation and the differential equation.

{\bf Acknowledgement} We thank the Center for Brains, Minds, and Machines (CBMM)
for hospitality. This work was supported by The Casey
and Family Foundation, the Foundational Questions Institute, the Rothberg Family Fund for Cognitive Science and IAIFI through NSF grant PHY-2019786.

\bibliography{modularity}

\clearpage

\onecolumngrid

\begin{center}
    {\LARGE\bf Supplementary material}
\end{center}

\setcounter{secnumdepth}{2}

\begin{appendix}

\section{Neural network training details}

{\bf Preparing data} Our default method for generating training data is to draw the sample points of $\z$ as i.i.d.~normalized Gaussians, \ie, $\z\sim N(0,\mathbf{I}_{N})$. However, there are three issues to note: (1) For the double pendulum, since we focus on the small angle regime, we instead use the narrowed the Gaussian distribution  $(\theta_1,\theta_2,\dot{\theta}_1,\dot{\theta}_2)\sim N(0,0.1^2)$. (2) To avoid the $r=2M=1$
singularity of the Schwarzchild metric, we draw radius its $r$ from a uniform distribution $U(1.1,6)$ (F); (3) For the two general relativity examples E and F, the time variable is sampled from a uniform distribution $U[0,3]$.

The ADAM optimizer~\cite{kingma2014adam} is employed to train the neural network. The output $\z'$ is computed as $\z'=\z+\f_{\rm NN}(\z)$ rather than $\z'=\f_{\rm NN}(\z)$ to ensure $\W\equiv\frac{\partial \z'}{\partial \z}\approx \mathbf{I}$ at initialization, so that $\W^{-1}$ is well-conditioned and avoids training instabilities. 
For examples A-D, we train the neural network for 2,000 epochs with learning rate $10^{-3}$; For examples E and F, we train for 1,000 epochs three times while annealing the learning rate as $\{5\times 10^{-3},10^{-3},2\times 10^{-4}\}$.

\section{Tensor transformation rules}\label{app:tensor_transf}

Although tensor transformation rules are well-known, we list the relevant ones here for the convenience of any reader wishing to implement our method. 
We consider a coordinate vector $\z\in\mathbb{R}^N$ and a tensor field $T(\z)$. Under the coordinate transformation $\z\to\z'$, the transformation rule for the tensor field $T(\z)\to T'(\z')$ depends on the type of $T$. In general relativity, a contravariant (1,0) vector $v^i$ and a covariant (0,1) vector $v_i$ transforms as 
\begin{equation}
\begin{aligned}
    &v'^{i}=\frac{\partial z'^{i}}{\partial z^j}v^j=\W^i_{\ j}v^j,\\
    &v'_{i}=\frac{\partial z^{j}}{\partial z'^i}v_j=(\W^{-1})^j_{\ i}v_j=(\W^{-T})^{\ j}_iv_j,
\end{aligned}
\end{equation}
where $\W^i_{\ j}\equiv \frac{\partial z'^{i}}{\partial z^j}$.
More generally, an $(m,n)$ tensor $T^{i_1\cdots i_m}_{j_1\cdots j_n}$ transforms as \begin{equation}
    T'^{i'_1\cdots i'_m}_{j'_1\cdots j'_n}=\W^{i'_1}_{\ i_1}\cdots \W^{i'_m}_{\ i_m}(\W^{-1})^{j_1}_{\ j'_1}\cdots (\W^{-1})^{j_n}_{\ j'_n}T^{i_1\cdots i_m}_{j_1\cdots j_n}
\end{equation}

In this paper, we are interested in these specific cases:
\begin{itemize}
    \item $(1,0)$ vector: The transformation rule is $\f\to\f'=\W\f$, and the dynamical system $\dot{\z}=\f(\z)$ where $\dot{\z}\equiv\frac{d\z}{dt}$ falls in this category. 
    \item $(0,2)$ tensor: The transformation rule is $\g\to\g'=\W^{-T}\g \W^{-1}$, and the
    metric tensor $g_{\mu\nu}$ from General Relativity lies in this category. 
\end{itemize}

We define the first-order differentiation of $T$ wrt $\z$ as $\J\equiv\nabla T$ or $J^{i_1\cdots i_m}_{j_1\cdots j_nl}\equiv \partial_l T^{i_1\cdots i_m}_{j_1\cdots j_n}$. Written as such, $\J$ is not a $(m,n+1)$ tensor because its transformation rule is
\beq{TransformationEq1}
\begin{aligned}
J'^{i'_1\cdots i'_m}_{j'_1\cdots j'_n l'}&\equiv \partial'_{l'} T'^{i'_1\cdots i'_m}_{j'_1\cdots j'_n}=(\W^{-T})_{l'}^{\ l}\partial_l (\W^{i'_1}_{\ i_1}\cdots \W^{i'_m}_{\ i_m}(\W^{-1})^{j_1}_{\ j'_1}\cdots (\W^{-1})^{j_n}_{\ j'_n}T^{i_1\cdots i_m}_{j_1\cdots j_n})\\
&=\sum_{a=1}^m (\W^{-T})_{l'}^{\ l}(\partial_l \W^{i'_a}_{\ i_a})(\prod_{b=1,b\neq a}^m\W^{i'_b}_{\ i_b})(\prod_{c=1}^n(\W^{-1})^{j_c}_{\ j'_c})T^{i_1\cdots i_m}_{j_1\cdots j_n}\\
&+\sum_{a=1}^n (\W^{-T})_{l'}^{\ l}(\prod_{b=1}^m\W^{i'_b}_{\ i_b})(\partial_l (\W^{-1})^{j_a}_{\ j'_a} )(\prod_{c=1,c\neq a}^n(\W^{-1})^{j_c}_{\ j'_c})T^{i_1\cdots i_m}_{j_1\cdots j_n}\\
&+(\W^{-T})_{l'}^{\ l}\W^{i'_1}_{\ i_1}\cdots \W^{i'_m}_{\ i_m}(\W^{-1})^{j_1}_{\ j'_1}\cdots (\W^{-1})^{j_n}_{\ j'_n}\partial_l T^{i_1\cdots i_m}_{j_1\cdots j_n}
\end{aligned}
\eeq

$(1,0)$ vector $\f$:

\beq{TransformationEq2}
    \begin{aligned}
    \J'^{i}_{\ j}&\equiv \partial'_j \f'^i \\
    & = (\W^{-T})_{j}^{\ k}\partial_k(\W^i_{\ l}\f^l) \\ 
    & = (\W^{-T})_{\ j}^k(\partial_k \W_{\ l}^i)\f^l + (\T^{-T})_{j}^{\ k}\W^i_{\ l}\partial_k \f^l
    \end{aligned}
\eeq

$(0,2)$ tensor $\g$:

\beq{TransformationEq4}
    \begin{aligned}
    \J'_{ijk}&\equiv\partial'_k g'_{ij}\\
    &=(\W^{-T})_k^{\ l}\partial_l((\W^{-T})_i^{\ m}g_{mn}(\W^{-1})^n_{\ j})\\
    &=(\W^{-T})_k^{\ l}\partial_l(\W^{-T})_i^{\ m}g_{mn}(\W^{-1})^n_{\ j}+(\W^{-T})_k^{\ l}(\W^{-T})_i^{\ m}\partial_l g_{mn}(\W^{-1})^n_{\ j}+(\W^{-T})_k^{\ l}(\W^{-T})_i^{\ m}g_{mn}\partial_l(\W^{-1})^n_{\ j}
    \end{aligned}
\eeq

\section{Deriving PDEs forr generalized symmetry}\label{app:derive_pde}

Here we present more detailed derivations of the partial differential equations (PDEs) corresponding to Hamiltonicity, Lie equivariance, Lie invariance and canonical equivariance.
The PDEs for translational invariance and modularity are obvious given the definitions. 

\subsection{Hamiltonicity}

Hamiltonian systems (a.k.a. {\it symplectic systems}) can be written with the state $\z\in\mathbb{R}^{2d}$ as the concatenation of coordinates $\x\in\mathbb{R}^d$ and momenta $\p\in\mathbb{R}^d$. The Hamiltonian $H(\z)$ is a conserved quantity and governs the system evolution as $(\dot{\x}, \dot{\p})=(\partial_\p H, -\partial_\x H)$, or more compactly, $\dot{\z}=\M\nabla H$ where $\M$ is the (2,0) symplectic matrix $((\mat{0},\mat{I}_d),(-\mat{I}_d,\mat{0}))$. $\f\equiv\M\nabla H$ is a (1,0) vector. $\M$ satisfies $\M^{-1}=\M^T=-\M=-\M^{-T}$. We observe that 
\begin{equation}
    \J_{ij}=\partial_j f^i=\partial_j (\M\nabla H)^i=\partial_j(\M^{ik}\partial_k H)=\M^{ik}\partial_j\partial_k H
\end{equation}
Left multiplying by $(\M^{-1})_{mi}$ on both sides and utilizing $(\M^{-1})_{mi}\M^{ik}=\delta_m^{\ k}$ gives
\begin{equation}
    (\M^{-1}\J)_{mj}=\delta_m^{\ k}\partial_j\partial_k H=\partial_j\partial_m H.
\end{equation}
Since the right hand side $\partial_j\partial_m H=\partial_m\partial_j H$ is symmetric, $\M^{-1}\J$ is symmetric too, i.e.,
\begin{equation}
    (\M^{-1}\J)^T = \M^{-1}\J \Longrightarrow \J^T\M + \M\J=0
\end{equation}
Note that the converse is also true: if $\M\J$ is symmetric, then there exists a scalar field $H$ such that $\f=\M\nabla H$. We first introduce a lemma here:

\begin{lemma}
A vector field $\f$ can be written as the gradient of a scalar field $\phi$, i.e. $\f=\nabla \phi$, if and only if $\J\equiv\nabla\f$ is symmetric.
\end{lemma}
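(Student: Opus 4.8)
The plan is to prove the two implications separately, with the forward direction being immediate and the converse carrying the real content. For the forward direction, suppose $\f=\nabla\phi$ for a (twice continuously differentiable) scalar field $\phi$. Then $J_{ij}=\partial_j f_i=\partial_j\partial_i\phi=\partial_i\partial_j\phi=J_{ji}$, where the middle equality is the symmetry of mixed second partial derivatives (Clairaut/Schwarz). Hence $\J$ is symmetric.

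For the converse I would construct $\phi$ explicitly by integrating $\f$ along the radial segment from the origin. Since the domain of interest here is $\mathbb{R}^n$ (in general, any region star-shaped about the origin suffices), define
\[
\phi(\z)\equiv\int_0^1 \f(t\z)\cdot\z\,dt.
\]
Differentiating under the integral sign — legitimate because $\f$ is $C^1$ — gives
\[
\partial_i\phi(\z)=\int_0^1\Big[f_i(t\z)+t\sum_j z_j\,\partial_i f_j(t\z)\Big]\,dt.
\]
Now the hypothesis enters: using the symmetry $\partial_i f_j=\partial_j f_i$, the second term becomes $t\sum_j z_j\,\partial_j f_i(t\z)=t\,\frac{d}{dt}f_i(t\z)$, so the whole integrand collapses to $f_i(t\z)+t\,\frac{d}{dt}f_i(t\z)=\frac{d}{dt}\big[t\,f_i(t\z)\big]$. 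By the fundamental theorem of calculus, $\partial_i\phi(\z)=\big[t\,f_i(t\z)\big]_{t=0}^{t=1}=f_i(\z)$, i.e.\ $\f=\nabla\phi$.

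The main obstacle is conceptual rather than computational and is essentially topological: the radial-path construction needs the domain to be star-shaped — automatic here since we work on all of $\mathbb{R}^n$ — and on a domain with nontrivial first de Rham cohomology the converse genuinely fails, so this assumption cannot be dropped. The only other point requiring a line of justification is the interchange of $\partial_i$ with the $t$-integral, which is routine given the assumed smoothness of $\f$. Finally, applying this lemma to the vector field $\M^{-1}\f$, whose Jacobian is $\M^{-1}\J$, and combining it with the equivalence $(\M^{-1}\J)^T=\M^{-1}\J\Leftrightarrow\J^T\M+\M\J=0$ established just above, yields the promised converse for Hamiltonicity: vanishing of $\hl_{ij}\f$ implies $\M^{-1}\f=\nabla H$ for some scalar $H$.
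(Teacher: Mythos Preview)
Your proof is correct. The forward direction matches the paper's exactly (symmetry of the Hessian via Clairaut/Schwarz). For the converse, however, you take a genuinely different route. The paper argues indirectly: it writes the loop integral $\oint\f\cdot d\z$ and uses Green's theorem to rewrite it as $\tfrac12\sum_{i,j}\iint(\partial_i f_j-\partial_j f_i)\,dz_i\,dz_j$, which vanishes by the symmetry hypothesis; it then invokes the standard fact that vanishing circulation (path-independence) is equivalent to being a gradient. You instead give the explicit Poincar\'e-lemma construction $\phi(\z)=\int_0^1\f(t\z)\cdot\z\,dt$ and verify $\nabla\phi=\f$ directly by differentiating under the integral and collapsing the integrand to a total $t$-derivative.

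Your approach is more constructive---it actually produces $\phi$ rather than inferring its existence---and it isolates the topological hypothesis cleanly (star-shapedness of the domain), which the paper's argument also requires (simple-connectedness for the loop-integral step) but does not state. The paper's version is perhaps more familiar to a physics audience accustomed to ``curl-free implies conservative'' reasoning, and it generalizes immediately to arbitrary closed loops, whereas your construction is tied to radial paths from a fixed base point. Both are standard and both are fine here since the ambient space is $\mathbb{R}^n$. Your closing remark applying the lemma to $\M^{-1}\f$ to recover the Hamiltonicity converse is exactly what the paper does next.
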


\begin{proof}
$\Longrightarrow$: $\J\equiv\nabla\f=\nabla\nabla\phi$, i.e. the Hessian (the matirx of second derivatives) of $\phi$, which is symmetric.

\noindent $\Longleftarrow$: $\J_{ij}=\J_{ji}$ means that $\partial_i \f_j=\partial_j \f_i$. Consider the loop integral
\begin{equation}
    \oint\f\cdot d\z =\frac{1}{2}\sum_{i}\sum_{j}\iint ( \partial_i f_j-\partial_j f_i)d\z_id\z_j=0
\end{equation}
The first equality is due to Green's theorem. This Vanishing loop integral is equivalent to saying $\f$ is a gradient field~\cite{gradient}.
\end{proof}
\noindent Using this lemma, the fact that $\M\J\equiv\M\nabla\f=\nabla(\M\f)$ is symmetric is equivalent to saying there exists a scalar function $H$ such that $\M\f=-\nabla H \Leftrightarrow \f=\M\nabla H$.

\subsection{Lie Equivariance}

By definition, a vector field $\f(\z)$ is equivariant under a Lie group $\mathcal{G}$ if $\f(g\z)=g\f(\z)$ for every group element $g\in\mathcal{G}$.
The element $g$ can be expressed in terms of generators $K_i$ $(i=1,\cdots,d)$ such that $g={\rm exp}(\theta_i K_i)$. 
When $\theta_i$ are small, $g\approx I+\theta_i K_i$, so the following equalities hold to first order in a Taylor expansion in $\theta$:
\begin{equation}
    \begin{aligned}
        \f(g\z)&=\f(\z+\theta_i K_i\z)=\f(\z)+\theta_i\nabla \f(\z)K_i\z \\
        g\f(\z)&=\f(\z)+\theta_iK_i\f(\z) \\
        \f(g\z)-g\f(\z)&=\theta_i(\nabla\f(\z)K_i\r-K_i\f(\z)) \\
        \frac{\partial(\f(g\z)-g\f(\z))}{\partial\theta_i}&=\nabla\f(\z)K_i\z-K_i\f(\z)=\J(\z)K_i\z-K_i\f(\z) \\
    \end{aligned}
\end{equation}
Hence the equivariance PDE and loss are
\begin{equation}
\label{eq:A5}
\begin{aligned}
    &\hl_i \f = \J K_i\z-K_i\f=\mat{0},\\ 
    &\ell_{\rm eqv}=\frac{1}{N_g}\sum_{j=1}^{N_g}\sum_{i=1}^{N_s}(||\J(\z^{(i)}) K_j\z^{(i)}-K_j\f(\z^{(i)})||_2^2)/(\sum_{i=1}^{N_s} ||\z^{(i)}||_2^2 ),
\end{aligned}
\end{equation}
where the aim of the denominator $||\f||_2^2$ is to make the loss function dimensionless, i.e., a (0,0) tensor.
Eq.~(\ref{eq:A5}) holds for all Lie groups -- one just needs to insert generators $K_i$ to obtain the corresponding loss for any Lie group. For the Lorentz group $SO(3,1)$ , for example, there are $N_g=6$ Lie generators in total, where first three correspond to boosting, and last three to spatial rotations.

\begin{equation}
\begin{aligned}
    K_1 = 
    \begin{pmatrix}
    0 & 1 & 0 & 0 \\
    1 & 0 & 0 & 0 \\
    0 & 0 & 0 & 0 \\ 
    0 & 0 & 0 & 0
    \end{pmatrix},
    &\ K_2 = 
    \begin{pmatrix}
    0 & 0 & 1 & 0 \\
    0 & 0 & 0 & 0 \\
    1 & 0 & 0 & 0 \\ 
    0 & 0 & 0 & 0
    \end{pmatrix},
    \ K_3 = 
    \begin{pmatrix}
    0 & 0 & 0 & 1 \\
    0 & 0 & 0 & 0 \\
    0 & 0 & 0 & 0 \\ 
    1 & 0 & 0 & 0
    \end{pmatrix},\\
    K_4 = 
    \begin{pmatrix}
    0 & 0 & 0 & 0 \\
    0 & 0 & 1 & 0 \\
    0 & -1 & 0 & 0 \\
    0 & 0 & 0 & 0
    \end{pmatrix},
    &\ K_5 = 
    \begin{pmatrix}
    0 & 0 & 0 & 0 \\
    0 & 0 & 0 & 1 \\
    0 & 0 & 0 & 0 \\
    0 & -1 & 0 & 0
    \end{pmatrix},
    \ K_6 = 
    \begin{pmatrix}
    0 & 0 & 0 & 0 \\
    0 & 0 & 0 & 0 \\
    0 & 0 & 0 & 1 \\
    0 & 0 & -1 & 0
    \end{pmatrix}
\end{aligned}
\end{equation}

\subsection{Lie Invariance}
Following the notation above, for any $(m,n)$-tensor field $T$:
\begin{equation}
    T(g\z)=T(\z)+\theta_i\nabla T(\z)K_i\z
\end{equation}
Lie invariance requests that
\begin{equation}
    T(g\z)=T(\z) \Longrightarrow \nabla T(\z) K_i\z=0\ \forall i
\end{equation}
Similarly we can construct the loss function as
\begin{equation}
    \ell_{\rm inv}=\frac{1}{N_g}\sum_{j=1}^{N_g}\sum_{i=1}^{N_s}(||\J(\z^{(i)}) K_j\z^{(i)}||_F^2)/(\sum_{i=1}^{N_s} ||\z^{(i)}||_2^2 ).
\end{equation}
Here $\J=\nabla T$ is reshaped as a matrix of size $N^{m+n}\times N$ where the $l$-th column is the vectorization of $\partial_l T^{i_1\cdots i_m}_{j_1\cdots j_n}$. Note that translational invariance can also be included in this framework by setting $K_j=\partial_j$.

\subsection{Canonical Equivariance}

Canonical equivariance requires that
\begin{equation}\label{eq:can_cond}
    \begin{aligned}
      &\f_{\x}(g\x,g^{-1}\p)=g^{-1}\f_{\x}(\x,\p), \\ 
      &\f_{\p}(g\x,g^{-1}\p)=g\f_{\x}(\x,\p),
    \end{aligned}
\end{equation}
for all $g\in\mathcal{G}$. For a Lie group $\mathcal{G}$, $g={\rm exp}(\theta_j K_j)$, where $K_j$ are generators. 
Note that $g^{-t}={\rm exp}(-\theta_jK_j^t)$. Differentiating with respect to $\theta_j$ in Eq.~\eqref{eq:can_cond} gives \begin{equation}
    \begin{aligned}
    &K_j\x\cdot\nabla_{\x}\f_{\x} - K_j^t\p\cdot\nabla_{\p}\f_{\x} = -K_j^t\f_{\x}, \\
    & K_j\x\cdot\nabla_{\x}\f_{\p} - K_j^t\p\cdot\nabla_{\p}\f_{\p} = K_j\f_{\p}
    \end{aligned}
\end{equation}

\section{Physical systems}\label{app:eq}

For convenience, we here review the well-known equations of the physical systems we test  our method on in the paper.

\subsection{Double pendulum}

The dynamics of a double pendulum can be described by two angles $(\theta_1,\theta_2)$ and corresponding angular velocities $(\dot{\theta}_1,\dot{\theta}_2)$:
\begin{equation}
\frac{d}{dt}
    \begin{pmatrix}
        \theta_1 \\
        \theta_2 \\
		\dot{\theta}_1\\
		\dot{\theta}_2
	\end{pmatrix}=
	\begin{pmatrix}
	\dot{\theta}_1\\
	\dot{\theta}_2\\
	\frac{m_2l_1\dot{\theta}_1^2{\rm sin}(\theta_2-\theta_1){\rm cos}(\theta_2-\theta_1)+m_2g{\rm sin}\theta_2+m_2l_2\dot{\theta}_2^2{\rm sin}(\theta_2-\theta_1)-(m_1+m_2)g{\rm sin}\theta_1}{(m_1+m_2)l_1-m_2l_1{\rm cos}^2(\theta_2-\theta_1)}\\
		\frac{-m_2l_2\dot{\theta}_2^2{\rm sin}(\theta_2-\theta_1)+(m_1+m_2)(g{\rm sin}\theta_1{\rm cos}(\theta_2-\theta_1)-l_1\dot{\theta}_2^2{\rm sin}(\theta_2-\theta_1)-g{\rm sin}\theta_2)}{(m_1+m_2)l_1-m_2l_1{\rm cos}^2(\theta_2-\theta_1)}
	\end{pmatrix}
\end{equation}

The canonical coordinates corresponding to $(\dot{\theta}_1,\dot{\theta}_2)$ are $(p_1,p_2)$, given by
\begin{equation}
\begin{aligned}
    &p_1 = (m_1+m_2)l_1^2\dot{\theta}_1 + m_2l_1l_2\dot{\theta}_2{\rm cos}(\theta_1-\theta_2), \\
    &p_2 = m_2l_2^2\dot{\theta}_2+m_2l_1l_2\dot{\theta}_1{\rm cos}(\theta_1-\theta_2).
\end{aligned}
\end{equation}
The dynamical equations (B1) can be rewritten in terms of $(\theta_1, \theta_2, p_1, p_2)$:
\begin{equation}
\frac{d}{dt}
    \begin{pmatrix}
    \theta_1 \\ 
    \theta_2 \\ 
    p_1 \\ 
    p_2
    \end{pmatrix}=
    \begin{pmatrix}
    \frac{l_2p_1-l_1p_2{\rm cos}(\theta_1-\theta_2)}{l_1^2l_2(m_1+m_2{\rm sin}^2(\theta_1-\theta_2))}\\
    \frac{l_1(m_1+m_2)p_2-l_1m_2p_1{\rm cos}(\theta_1-\theta_2)}{l_1l_2^2m_2(m_1+m_2{\rm sin}^2(\theta_1-\theta_2))}\\
    -(m_1+m_2)gl_1{\rm sin}\theta_1 - C_1 + C_2 \\
    - m_2gl_2{\rm sin}\theta_2 + C_1 - C_2
    \end{pmatrix},\ 
    {\rm where}\ \begin{pmatrix}
    C_1 \\ 
    C_2
    \end{pmatrix}=
    \begin{pmatrix}
    \frac{p_1p_2{\rm sin}(\theta_1-\theta_2)}{l_1l_2(m_1+m_2{\rm sin}^2(\theta_1-\theta_2))}\\
    \frac{l_2^2m_2p_1^2+l_1^2(m_1+m_2)p_2^2-l_1l_2m_2p_1p_2{\rm cos}(\theta_1-\theta_2)}{2l_1^2l_2^2(m_1+m_2{\rm sin}^2(\theta_1-\theta_2))^2}
    \end{pmatrix}
\end{equation}
In the small angle regime $\theta_1,\theta_2\ll1$, after using the approximations $\sin\theta_1\approx\theta_1$, $\sin\theta_2\approx\theta_2$, $\cos(\theta_1-\theta_2)\approx 1$, $\dot{\theta}_1^2\approx 0$, $\dot{\theta}_2^2\approx 0$ and setting $l_1=l_2=l$,  (E1)-(E3) simplify to
\begin{equation}
    \frac{d}{dt}
    \begin{pmatrix}
    \theta_1 \\ 
    \theta_2 \\
    \dot{\theta}_1 \\ 
    \dot{\theta}_2
    \end{pmatrix}
    =
    \begin{pmatrix}
    0 & 0 & 1 & 0 \\
    0 & 0 & 0 & 1 \\
    -\frac{(m_1+m_2)g}{m_1l} & \frac{m_2g}{m_1l} & 0 & 0 \\
    \frac{(m_1+m_2)g}{m_1l} & -\frac{(m_1+m_2)g}{m_1l} & 0 & 0 \\
    \end{pmatrix}
    \begin{pmatrix}
     \theta_1 \\ 
    \theta_2 \\
    \dot{\theta}_1 \\ 
    \dot{\theta}_2
    \end{pmatrix},
\end{equation}
\begin{equation}
    \begin{aligned}
        &p_1 = (m_1+m_2)l^2\dot{\theta}_1 + m_2l^2\dot{\theta}_2,\\
        &p_2 = m_2l^2\dot{\theta}_2 + m_2l^2\dot{\theta}_1,
    \end{aligned}
\end{equation}
\begin{equation}
    \frac{d}{dt}
    \begin{pmatrix}
    \theta_1 \\ 
    \theta_2 \\
    p_1 \\ 
    p_2
    \end{pmatrix}=
    \begin{pmatrix}
    0 & 0 & \frac{1}{m_1l^2} & -\frac{1}{m_1l^2} \\
    0 & 0 & -\frac{1}{m_1l^2} & \frac{m_1+m_2}{l^2m_2m_1} \\
    -(m_1+m_2)gl & 0 & 0 & 0 \\
    0 & -m_2gl & 0 & 0 
    \end{pmatrix}
    \begin{pmatrix}
    \theta_1 \\ 
    \theta_2 \\
    p_1 \\ 
    p_2
    \end{pmatrix}, \\
\end{equation}
\begin{equation}
    \begin{pmatrix}
    \theta_1 \\
    \dot{\theta}_1 \\
    \theta_2 \\
    \dot{\theta}_2 \\
    \end{pmatrix} = 
    \begin{pmatrix}
    1 & 0 & 0 & 0 \\
    0 & \frac{1}{m_1l^2} & 0 & -\frac{1}{m_1l^2} \\
    0 & 0 & 1 & 0 \\
    0 & -\frac{1}{m_1l^2} & 0 & \frac{m_1+m_2}{m_1m_2l^2}
    \end{pmatrix}
    \begin{pmatrix}
    x_1 \\
    p_1 \\
    x_2 \\
    p_2
    \end{pmatrix},
\end{equation}

From Eq.~(E4), we can write $\theta_1$ and $\theta_2$ as
linear combinatitons of two {\bf normal modes} 
$\theta_\pm(t)$ that oscillate independently with frequencies $\omega_{\pm}$:
\begin{equation}
    \begin{aligned}
       \omega_{\pm}^2=\frac{g}{m_1l}\left[m_1+m_2\pm\sqrt{(m_1+m_2)m_2}\right], \\
       \theta_1 = \theta_+ - \theta_-, \quad\theta_2 = -\sqrt{\frac{m_1+m_2}{m_2}}(\theta_++\theta_-)
    \end{aligned}
\end{equation}

\subsection{Friedmann–Robertson–Walker (FRW) metric}

The Friedmann–Robertson–Walker (FRW) metric in the spherical coordinate and in the cartesian coordiante are:

\begin{equation}
\begin{aligned}
    &\g_{\rm spherical}^{\rm FRW}(t,r,\theta,\phi)=\begin{pmatrix}
   	1 & 0 & 0 & 0 \\
   	0 & -\frac{a(t)^2}{1-kr^2} & 0 & 0 \\
   	0 & 0 & -a(t)^2r^2 & 0 \\
   	0 & 0 & 0 & -a(t)^2r^2{\rm sin}^2\theta
   	\end{pmatrix},\\
    &\g_{\rm cartesian}^{\rm FRW}(t,x,y,z)=\begin{pmatrix}
   	1 & 0 & 0 & 0 \\
   	0 & -(r^2+\frac{kx^2}{1-kr^2})\frac{a(t)^2}{r^2} & -\frac{kxya(t)^2}{1-kr^2} & -\frac{kxza(t)^2}{1-kr^2} \\
   	0 & -\frac{kxya(t)^2}{1-kr^2} & -(r^2+\frac{ky^2}{1-kr^2})\frac{a(t)^2}{r^2} & -\frac{kyza(t)^2}{1-kr^2} \\
   	0 & -\frac{kxza(t)^2}{1-kr^2} & -\frac{kyza(t)^2}{1-kr^2} & -(r^2+\frac{kz^2}{1-kr^2})\frac{a(t)^2}{r^2}
   	\end{pmatrix}.
\end{aligned}
\end{equation}
When $k=1$ and $a(t)=t$, the FRW metric can be transformed to the Minkowski metric via a global transformation:

\begin{equation}
    \begin{pmatrix} t' \\ x' \\ y' \\ z'
   	\end{pmatrix}
   	=
   	\begin{pmatrix}
   	t\sqrt{1+r^2} \\ tx \\ ty \\ tz
   	\end{pmatrix}
\end{equation}

\subsection{Schwarzschild metric and GP coordinate}

The Schwarzschild metric in spherical coordinates and in Cartesian coordinates are
\begin{equation}
\begin{aligned}
    &\g_{\rm spherical}^{\rm Sch}(t,r,\theta,\phi)=\begin{pmatrix}
   	1-\frac{2M}{r} & 0 & 0 & 0 \\
   	0 & -(1-\frac{2M}{r})^{-1} & 0 & 0 \\
   	0 & 0 & -r^2 & 0 \\
   	0 & 0 & 0 & -r^2{\rm sin}^2\theta
   	\end{pmatrix},\\
    &\g_{\rm cartesian}^{\rm Sch}(t,x,y,z)=\begin{pmatrix}
    1-\frac{2M}{r} & 0 & 0 & 0 \\
    0 & -1-\frac{2Mx^2}{(r-2M)r^2} & -\frac{2Mxy}{(r-2M)r^2} & -\frac{2Mxz}{(r-2M)r^2} \\
    0 & -\frac{2Mxy}{(r-2M)r^2} & -1-\frac{2My^2}{(r-2M)r^2} & -\frac{2Myz}{(r-2M)r^2} \\
    0 & -\frac{2Mxz}{(r-2M)r^2} & -\frac{2Myz}{(r-2M)r^2} & -1-\frac{2Mz^2}{(r-2M)r^2}
    \end{pmatrix}.
\end{aligned}
\end{equation}
The spatial part of the metric diverges at $r=2M$. However, if we transform to the
Gullstrand–Painlev\'e coordinate defined by
\begin{equation}
    \begin{pmatrix}
    		t' \\ x' \\ y' \\ z'
    	\end{pmatrix} = \begin{pmatrix}
    	t-2M(-2u+\ln{u+1\atop u-1}))\\
    	x \\ 
    	y \\ 
    	z
    	\end{pmatrix}, u=\sqrt{\frac{r}{2M}},
\end{equation}
the apparent singularity disappears, and we obtain a spatial metric which, remarkably, is 
the same as for Euclidean space:
\begin{equation}
    \g_{\rm GP}^{\rm Sch}=\begin{pmatrix}
    	1-\frac{2M}{r} & -\sqrt{\frac{2M}{r}}\frac{x}{r} & -\sqrt{\frac{2M}{r}}\frac{y}{r} & -\sqrt{\frac{2M}{r}}\frac{z}{r} \\
    	-\sqrt{\frac{2M}{r}}\frac{x}{r} & -1 & 0 & 0 \\
    	-\sqrt{\frac{2M}{r}}\frac{y}{r} & 0 & -1 & 0 \\
    	-\sqrt{\frac{2M}{r}}\frac{z}{r} & 0 & 0 & -1
    	\end{pmatrix}
\end{equation}

\section{Underconstrained problems}

We saw that, ironically, the only example where our symbolic regression failed to find 
an exact formula for the discovered coordinate transformation was the very simplest one: the uniform motion of example A: 
\begin{equation}
    \begin{aligned}
        \dot{x} = p \\
        \dot{p} = 0
    \end{aligned}
\end{equation}
The reason for this is that the final equations with manifest symmetry are so simple 
that there are infinitely many different transformations that produce it, and our neural 
network has no incentive to find the simplest one.
Specifically, any coordinate transformation of the form
$x'=c_1(p)x+c_2(p)$ and $p'=c_3(p)$ 
preserves both translational invariance 
because 
\begin{equation}
    \begin{aligned}
    &\dot{x}'=c_1(p)\dot{x}+c_1'(p)x\dot{p} + c_2'(p)\dot{p} = c_1'(p)p=c_1'(c_3^{-1}(p'))c_3^{-1}(p')\equiv A(p'), \\
    &\dot{p}'=c_3'(p)\dot{p} = 0
    \end{aligned}
\end{equation}
and Hamiltonicity $H=\int A(p')dp'$.

\section{The AI Feynman Package}\label{app:feynman}

The goal of {\it symbolic regression} is discovering a symbolic expression that accurately matches a given data set. More specifically, we are given a table of numbers whose rows are of the form $\{x_1,\cdots,x_n,y\}$ where $y=f(x_1,\cdots,x_n)$, and our task is to discover the correct symbolic expression for the unknown mystery function $f$. The symbolic regression problem for mathematical functions has been tackled with a variety of methods, including sparse regression and genetic algorithms. 

The {\it AI Feynman} software improves on these methods by using physics-inspired strategies enabled by neural networks. 
It uses neural networks to discover hidden simplicity such as symmetry and separability in the data, which enables it to recursively break harder problems into simpler ones with fewer variables. The overall algorithm consists of a series of modules that try to exploit each of the above-mentioned properties. Like a human scientist, it tries many different strategies (modules) in turn, and if it cannot solve the full problem in one fell swoop, it tries to transform it and divide it into simpler pieces that can be tackled separately, recursively re-launching the full algorithm on each piece. These modules include dimensional analysis, polynomial fitting, brute force experssion search, neural-network-based tests and transformations (symmetries and separability).

Note that AI Feynman can handle cases with multiple input variables, but only one output, so we run AI Feynman multiple times to obtain each component of the transformation/transformed equation. 
We will now illustrate how AI Feynman works on Example E and F.

For Example E (Expanding universe and empty space), this is the transformation to be discovered:
\begin{equation*}
    \begin{pmatrix} t' \\ x' \\ y' \\ z'
   	   \end{pmatrix}=\begin{pmatrix}
   	   t\sqrt{1+r^2} \\ tx \\ ty \\ tz
   	   \end{pmatrix}
\end{equation*}
We thus need to run AI Feynman four times to learn $(t',x',y',z')$ separately. (1) for $t'$ (the most complicated case), by training a neural network, AI Feynman is first able to first discover $SO(3)$ symmetry, i.e., that $t'$ depends on $x$, $y$ and $z$ only through their combination $r\equiv \sqrt{x^2+y^2+z^2}$. AI Feynman then tries to express $t'$ as a function of $t$ and $r$. When AI Feynman tries to square the output $t'^2$ and do polynomial fitting, it succeeds and find that $t'^2=t^2+t^2r^2$. (2) for $x'$, AI Feynman can discover $x'=tx$ immediately by trying polynomial fitting, and the same holds for $y'$ and $z'$.

For Example F (Schwarzchild black hole and GP coordinates), this is the transformation to be discovered:
\begin{equation*}
    \begin{pmatrix}
    		t' \\ x' \\ y' \\ z'
    	\end{pmatrix} = \begin{pmatrix}
    	t+2M\left[2\sqrt{\frac{r}{2M}}+\ln{\sqrt{\frac{r}{2M}}-1\over \sqrt{\frac{r}{2M}}+1}\right]\\
    	x\\ 
    	y\\ 
    	z
    	\end{pmatrix}
\end{equation*}
In our case, $2M=1$. We again need to run AI Feynman four times to learn $(t',x',y',z')$ separately. (1) Like in the previous example, AI Feynman first discover rotational symmetry, i.e., that $t'$ only depends on the apace coordinates via their combination $r\equiv\sqrt{x^2+y^2+z^2}$. AI Feynman then discover additive modularity, i.e.,  that $t'(t,r)=g(t)+h(r)$ for some yet-to-be-discovered functions $g$ and $h$, so the problem break down into two simpler problems. $g(t)=t$ is trivially discovered via polynomial fitting, while $h(r)=\left[2\sqrt{r}+\ln{\sqrt{r}-1\over\sqrt{r}+1}\right]$ 
is discovered by brute force search over ever-more-complex symbolic formulas. (2) $x'=x$ can be easily discovered by polynomial fitting, as well as $y'=y$ and $z'=z$.

\section{Noisy data}
In the main text, we presented results for noise-free data. In an experimental setting, we need to quantify how noise affects affect our results. 
We construct noisy data by adding Gaussian noise of standard deviation $\sigma$ to each component of $T(\z)$ and rerun Example E (the Expanding universe and Minkowski space) for five different noise levels. 
FIG.~\ref{fig:frw_noise} shows that the effect of data noise is simply to add a noise floor to the resulting loss; in this particular example, we see that data noise with $\sigma=0.01$ still allows 
the loss to drop significantly below 0.001 and the models without hidden symmetry in Figure 3E.

\begin{figure}
    \centering
    \includegraphics[width=0.6\linewidth]{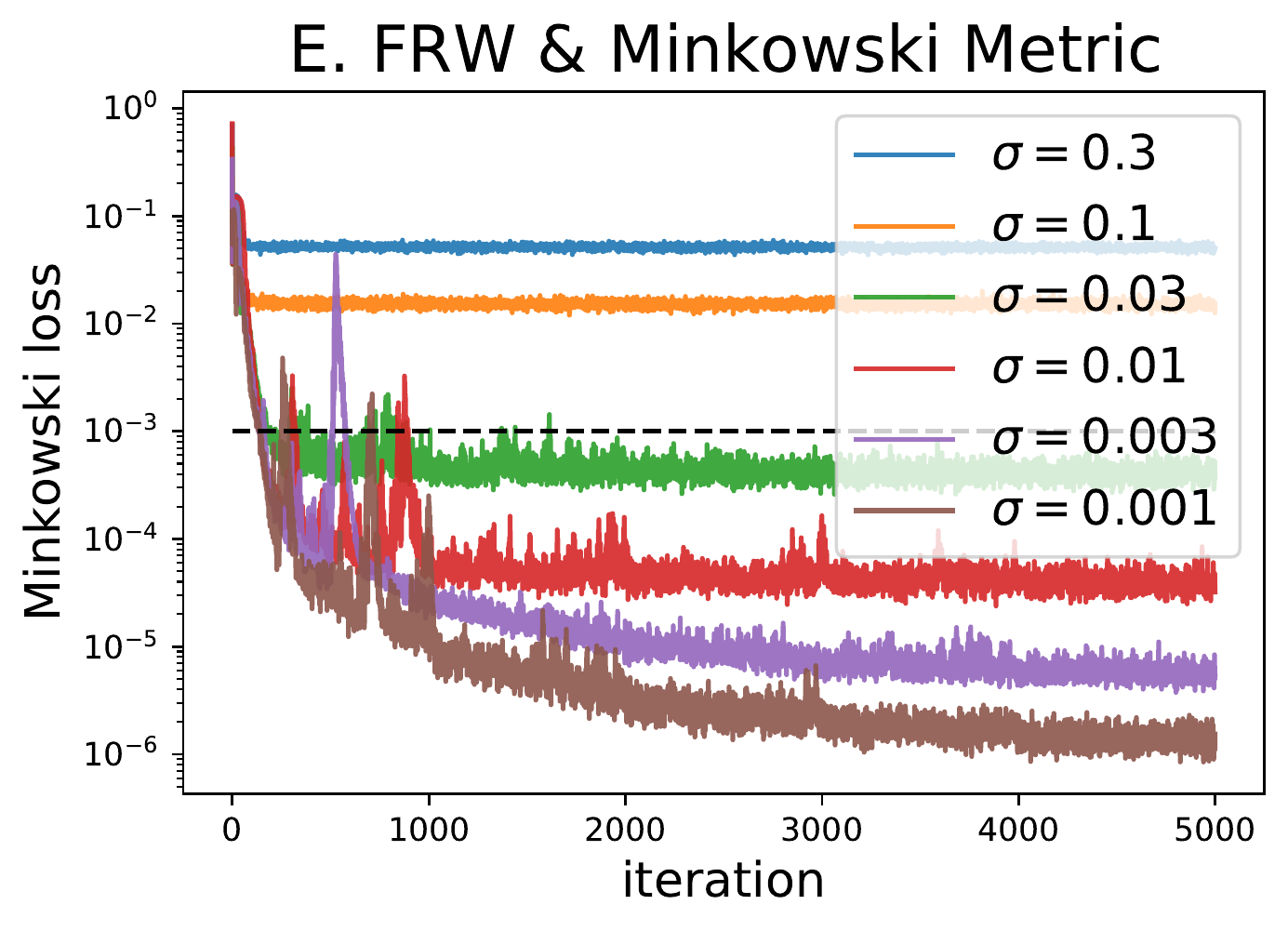}
    \caption{Training curve for the noisy metric tensor in Example E.}
    \label{fig:frw_noise}
\end{figure}

\section{Full Table \ref{tab:examples}}

A more detailed version of Table \ref{tab:examples} is Table \ref{tab:examples_full}, including $(m,n,s,t)$ indices and results of symbolic discoveries.

\begin{table*}[]
    \centering
    \caption{Physical Systems studied (Full version of Table \ref{tab:examples})}
    \resizebox{\textwidth}{!}{%
    \begin{tabular}{|c|c|c|c|c|c|c|c|}\hline
    ID & Name  & Original dynamics $\dot\z=\f(\z)$ or metric $g(\z)$ & Transformation $\z\mapsto\z'$& Symmetric dynamics 
    $\dot\z'=\f'(\z')$ or metric $g'(\z')$ &  Manifest Symmetries & $(m,n,s,t)$ &Symbolic sol.?\\\hline
    A & \makecell{1D Uniform \\ Motion }  & $\frac{d}{dt}\begin{pmatrix}
	a \\ b
	\end{pmatrix}=\begin{pmatrix}
	\frac{1}{2}(a+b){\rm ln}(\frac{a-b}{2})\\
	\frac{1}{2}(a+b){\rm ln}(\frac{a-b}{2})
	\end{pmatrix}$ & $\begin{pmatrix}
	a \\ b
	\end{pmatrix}=\begin{pmatrix}
	e^{\frac{1}{2}x} + e^{\frac{1}{2}p}\\ e^{\frac{1}{2}x} - e^{\frac{1}{2}p}
	\end{pmatrix}$ & $\frac{d}{dt}\begin{pmatrix}
    x\\ p
    \end{pmatrix}=\begin{pmatrix}
    p\\0
    \end{pmatrix}$ & \makecell{Hamiltonicity \\ 1D Translational invariance} & \makecell{$(1,0,-1,0)$ \\ $(1,0,-1,0)$} & No, Yes \\\hline
    B & \makecell{1D Harmonic \\Oscillator} & $\frac{d}{dt}\begin{pmatrix}
    a \\ b 
    \end{pmatrix}=\begin{pmatrix}
    (1+a){\rm ln}(1+b) \\ 
    -(1+b){\rm ln}(1+a)
    \end{pmatrix}$ & $\frac{d}{dt}\begin{pmatrix}
    a \\ b
    \end{pmatrix}=\begin{pmatrix}
    e^{\frac{1}{2}x}-1 \\ e^{\frac{1}{2}p}-1
    \end{pmatrix}$ &$\frac{d}{dt}\begin{pmatrix}
    x \\ p
    \end{pmatrix}=\begin{pmatrix}
    p \\ -x
    \end{pmatrix}$ & \makecell{Hamiltonicity \\ SO(2)-equivariance} & \makecell{$(1,0,-1,0)$ \\ $(1,0,0,1)$} & Yes, Yes  \\\hline
    C & 2D Kepler &  $\frac{d}{dt}\begin{pmatrix}
    a \\ b \\ c \\ d
    \end{pmatrix}=\begin{pmatrix}
    (1+a){\rm ln}(1+b) \\
    -\frac{(1+b){\rm ln}(1+a)}{8({\rm ln}^2(1+a)+{\rm ln}^2(1+c))^{3/2}}\\
    (1+c){\rm ln}(1+d) \\
    -\frac{(1+d){\rm ln}(1+c)}{8({\rm ln}^2(1+a)+{\rm ln}^2(1+c))^{3/2}}\\
    \end{pmatrix}$ &  $\begin{pmatrix}
    a \\ b \\ c \\ d
    \end{pmatrix}=\begin{pmatrix}
    e^{\frac{1}{2}x}-1 \\ e^{\frac{1}{2}p_x}-1 \\ e^{\frac{1}{2}y}-1 \\ e^{\frac{1}{2}p_y}-1
    \end{pmatrix}$  & $\frac{d}{dt}\begin{pmatrix}
    x \\ p_x \\ y \\ p_y
    \end{pmatrix}=\begin{pmatrix}
    p_x \\ -x/r^3 \\ p_y \\ -y/r^3
    \end{pmatrix}$ & \makecell{Hamiltonicity \\Can SO(2)-equivariance} & \makecell{$(1,0,-1,0)$\\$(1,0,0,1)$} & Yes, Yes \\\hline   
    D & \makecell{Double \\ Pendulum} & $\frac{d}{dt}\begin{pmatrix}
    \theta_1 \\ \dot{\theta}_1 \\ \theta_2 \\ \dot{\theta}_2
    \end{pmatrix}=\begin{pmatrix}
    \dot{\theta}_1 \\ -\frac{(m_1+m_2)g}{m_1l}\theta_1+\frac{m_2g}{m_1l}\theta_2 \\ \dot{\theta}_2 \\ \frac{(m_1+m_2)g}{m_1l}\theta_1-\frac{(m_1+m_2)g}{m_1l}\theta_2 
    \end{pmatrix}$ & \makecell{
    $\begin{pmatrix}
    \theta_1 \\ 
    \theta_2 \\ 
    \dot{\theta}_1 \\ \dot{\theta}_2
    \end{pmatrix}=\begin{pmatrix}
    -1 & 1 & 0 & 0 \\
     a & a & 0 & 0 \\
    0 & 0 & -1 & 1 \\
    0 & 0 & a & a
    \end{pmatrix}\begin{pmatrix}
    \theta_+ \\ \theta_- \\ \dot{\theta}_+ \\ \dot{\theta}_-
    \end{pmatrix}$ \\
    $a=\sqrt{\frac{m_1+m_2}{m_2}}$} &
    \makecell{$\frac{d}{dt}\begin{pmatrix}
    \theta_+ \\ \dot{\theta}_+ \\ \theta_- \\ \dot{\theta}_-
    \end{pmatrix}=\begin{pmatrix}
    \dot{\theta}_+ \\ -\omega_+^2\theta_+ \\ \dot{\theta}_- \\ -\omega_-^2\theta_-
    \end{pmatrix}$ \\  $\omega_{\pm}^2=\frac{(m_1+m_2)g}{m_1l}(1\pm\sqrt{\frac{m_2}{m_1+m_2}})$}  & \makecell{Hamiltonicity \\ $(2+2)$-Modularity} & \makecell{$(1,0,-1,0)$ \\ $(1,0,-1,0)$} & Yes, Yes  \\\hline
    E & \makecell{Expanding\\universe\\\& empty space} & $\g=\begin{pmatrix}
   	1 & 0 & 0 & 0 \\
   	0 & -(r^2+\frac{kx^2}{1-kr^2})\frac{t^2}{r^2} & -\frac{kxyt^2}{1-kr^2} & -\frac{kxzt^2}{1-kr^2} \\
   	0 & -\frac{kxyt^2}{1-kr^2} & -(r^2+\frac{ky^2}{1-kr^2})\frac{t^2}{r^2} & -\frac{kyzt^2}{1-kr^2} \\
   	0 & -\frac{kxzt^2}{1-kr^2} & -\frac{kyzt^2}{1-kr^2} & -(r^2+\frac{kz^2}{1-kr^2})\frac{t^2}{r^2}
   	\end{pmatrix}$ & $\begin{pmatrix} t' \\ x' \\ y' \\ z'
   	   \end{pmatrix}=\begin{pmatrix}
   	   t\sqrt{1+r^2} \\ tx \\ ty \\ tz
   	   \end{pmatrix}$ & $\g=\begin{pmatrix}
   	1 & 0 & 0 & 0 \\
   	0 & -1 & 0 & 0 \\
   	0 & 0 & -1 & 0 \\
   	0 & 0 & 0 & -1
   	\end{pmatrix}$ & \makecell{SO(3,1)-Invariance \\ 4D Translational Invariance} & \makecell{$(0,2,0,-2)$\\$(0,2,-1,-3)$} & Yes, Yes \\\hline
    F & \makecell{Schwarzchild\\black hole\\\& GP metric} & $\g=\begin{pmatrix}
    1-\frac{2M}{r} & 0 & 0 & 0 \\
    0 & -1-\frac{2Mx^2}{(r-2M)r^2} & -\frac{2Mxy}{(r-2M)r^2} & -\frac{2Mxz}{(r-2M)r^2} \\
    0 & -\frac{2Mxy}{(r-2M)r^2} & -1-\frac{2My^2}{(r-2M)r^2} & -\frac{2Myz}{(r-2M)r^2} \\
    0 & -\frac{2Mxz}{(r-2M)r^2} & -\frac{2Myz}{(r-2M)r^2} & -1-\frac{2Mz^2}{(r-2M)r^2}
    \end{pmatrix}$ & \makecell{
    	$\begin{pmatrix}
    		t' \\ x' \\ y' \\ z'
    	\end{pmatrix} = \begin{pmatrix}
    	t+2M\left[2u+\ln{u-1\over u+1}\right]\\
    	x\\ 
    	y\\ 
    	z
    	\end{pmatrix}$ \\ $u\equiv\sqrt{\frac{r}{2M}}$} &   $\g=\begin{pmatrix}
    	1-\frac{2M}{r} & -\sqrt{\frac{2M}{r}}\frac{x}{r} & -\sqrt{\frac{2M}{r}}\frac{y}{r} & -\sqrt{\frac{2M}{r}}\frac{z}{r} \\
    	-\sqrt{\frac{2M}{r}}\frac{x}{r} & -1 & 0 & 0 \\
    	-\sqrt{\frac{2M}{r}}\frac{y}{r} & 0 & -1 & 0 \\
    	-\sqrt{\frac{2M}{r}}\frac{z}{r} & 0 & 0 & -1
    	\end{pmatrix}$ & \makecell{SO(3)-Invariance \\ 3D Translational Invariance} & \makecell{$(0,2,0,-2)$ \\
    	$(0,2,-1,-3)$} & Yes, Yes \\\hline
    \end{tabular}}\\
    \label{tab:examples_full}
\end{table*}

\end{appendix}
\end{document}